\documentclass{article}

\PassOptionsToPackage{numbers, compress}{natbib}
\usepackage[preprint]{neurips_2026}

\usepackage[utf8]{inputenc} 
\usepackage[T1]{fontenc}    
\usepackage{hyperref}       
\usepackage{url}            
\usepackage{booktabs}       
\usepackage{amsfonts}       
\usepackage{amsmath}
\usepackage{amssymb}
\usepackage{mathtools}
\usepackage{amsthm}
\usepackage{nicefrac}       
\usepackage{microtype}      
\usepackage{xcolor}         
\usepackage{bm}             
\usepackage{enumitem}       
\usepackage{algorithm}
\usepackage{algorithmic}
\usepackage{cleveref}       
\usepackage[normalem]{ulem}
\usepackage{wrapfig}
\usepackage{subcaption}
\usepackage{tikzsymbols} 

\definecolor{JungleGreen}{RGB}{34,139,34}

\theoremstyle{plain}
\newtheorem{theorem}{Theorem}[section]
\newtheorem{proposition}[theorem]{Proposition}
\newtheorem{lemma}[theorem]{Lemma}

\theoremstyle{definition}
\newtheorem{definition}[theorem]{Definition}

\theoremstyle{remark}

\renewcommand{\S}{\ensuremath{\mathbb{S}}}

\newcommand{\M}{\ensuremath{\mathcal{M}}}
\newcommand{\U}{\ensuremath{\mathcal{U}}}
\renewcommand{\vector}[1]{\mathrm{\mathbf{#1}}}

\title{Spherical Harmonic Optimal Transport: \\Application to Climate Models Comparisons}

\author{%
  Pierre Houédry   \\
  INRIA Rennes \\
  \texttt{pierre.houedry@irisa.fr} \\
  \And 
  Iskander Legheraba \\
  University of Montpellier \\
  LPHI, UMR 5294, CNRS, INSERM\\
  \texttt{iskander-sabri.legheraba@umontpellier.fr} \\
 \AND
  Léo Buecher \\
  Université Bretagne Sud \\
IRISA, UMR 6074, CNRS \\
   \texttt{leo.buecher@irisa.fr} \\
 \And
  Nicolas Courty \\
 Université Bretagne Sud \\
IRISA, UMR 6074, CNRS \\
  \texttt{courty@univ-ubs.fr} \\
}

\begin{document}

\maketitle

\begin{abstract}
Optimal transport provides a powerful framework for comparing measures while respecting the geometry of their support, but comes with an expensive computational cost, hindering its potential application to real world use cases. On manifolds, convolutional algorithms based on the heat kernel have been proposed to alleviate this cost, but their theoretical properties remain largely unexplored. 
We establish that the heat kernel cost converges to the optimal transport cost as time vanishes in the balanced and unbalanced cases.  
In the specific case of the 2-sphere $\mathbb{S}^2$, we ensure that the associated Sinkhorn divergences retains the desirable geometric and analytic properties of classical optimal transport discrepancies. Moreover, we leverage the harmonic structure of the sphere to derive a fast Sinkhorn algorithm, requiring only $\mathcal{O}(n)$ memory and $\mathcal{O}(n^{3/2})$ time per iteration, with fully dense GPU-friendly operations. 
We validate its computational efficiency on synthetic data, and discuss its potential use in the evaluation of global climate models, providing both spatial and seasonal insights into models performances.
\end{abstract}
\section{Introduction}

\paragraph{Numerical OT.}
Optimal transport (OT) provides a principled framework for comparing probability measures that explicitly accounts for the geometry of their support, and has become a central tool in machine learning \cite{villani2009optimal}, with applications spanning generative modeling \cite{arjovskyWassersteinGenerativeAdversarial2017}, domain adaptation \cite{courtyOptimalTransportDomain2017}, and barycenter estimation \cite{cuturi2013sinkhorn, peyre2019computational}.

While data is often assumed to live in Euclidean spaces, many situations exhibit phenomena that inherently lie on curved manifolds. A canonical example is the two-dimensional sphere $\mathbb{S}^2$, which arises naturally in directional statistics \cite{mardia2000directional} and geospatial or climate sciences \cite{garrett2024validating}. This has motivated the development of OT methods that operate directly on the sphere and respect its intrinsic Riemannian geometry \cite{bonetSphericalSlicedWasserstein2022, liuLinearSphericalSliced2024}.

A major algorithmic advance was the introduction of entropic regularization, leading to the Sinkhorn algorithm and enabling scalable OT approximations \cite{cuturi2013sinkhorn}. Entropic regularization also gives rise to the Sinkhorn divergence, which corrects biases inherent to the regularized problem and yields a principled loss function for machine learning under suitable assumptions on the kernel \cite{pmlr-v89-feydy19a}. In Euclidean spaces with quadratic cost, the underlying Gibbs kernel is Gaussian and enjoys well-understood analytical properties. On curved manifolds such as $\mathbb{S}^2$, however, the Gibbs kernel generally fails to be positive definite \cite{jayasumana2015kernel, feragen2015geodesic, da2025invariant}, invalidating these desirable properties. Beyond these theoretical limitations, the most computationally efficient Sinkhorn implementations, such as the recent \textsc{FlashSinkhorn} \cite{flashsinkhorn}, exploit the inner-product structure of the squared Euclidean cost, a decomposition unavailable for the squared geodesic distance $d(x,y)=\arccos(\langle x, y \rangle)^2$ on $\mathbb{S}^2$. On a different level, sliced Wasserstein methods are traditional scalable alternatives. They have been extended to the sphere by projecting onto great circles \cite{bonetSphericalSlicedWasserstein2022} and further explored with alternative slicing techniques \cite{quellmalzSlicedOptimalTransport2023, quellmalz2024parallelly, tranStereographicSphericalSliced2024} or linearised variants \cite{liuLinearSphericalSliced2024}. While efficient at handling large number of samples, they usually do not provide transport plans, are subject to randomness via their Monte-Carlo approximations and their unbalanced versions~\cite{bonet2024slicing} were never considered on the sphere.

An appealing alternative, proposed in the context of convolutional OT \cite{solomon2015convolutional}, is to replace the Gibbs kernel with the heat kernel of the underlying manifold, motivated by Varadhan's formula linking heat diffusion to geodesic distances \cite{varadhan1967}. Yet this approach remains theoretically and computationally underdeveloped on $\mathbb{S}^2$: the conditions guaranteeing well-posedness of the resulting Sinkhorn divergence have not been verified for the heat kernel on the sphere, and existing implementations rely on generic numerical solvers that discard its rich harmonic structure.

\paragraph{OT for Climate models Comparison.} One particular and exciting applications of OT on the two-dimensional sphere is the evaluation of global climate models (GCMs). Evaluating the realism of GCMs is a central
challenge in climate science. The sixth-generation Coupled Model
Intercomparison Project~(CMIP6)~\cite{eyring2016overview} provides
a standardized ensemble of model simulations against which
observational references can be compared. A key difficulty is that
geophysical fields such as precipitations are measures that are
spatially structured, non-negative, and naturally defined on the
globe $\mathbb{S}^2$, yet standard evaluation metrics such as
Root Mean Square Error~(RMSE) or pattern correlation treat them as
flat Euclidean fields, ignoring the spherical geometry and the
distributional nature of precipitation. Recentlty, OT-based metrics have been proposed to address these limitations and provide a more holistic assessment of model performance \cite{Vissio2020evaluating,garrett2024validating,Rodrigues2025climate,Xie2025discovering}. However, these works have relied on balanced OT formulations that do not account for the systematic global mean biases exhibited by GCMs relative to observations, and have not fully leveraged the geometry of the sphere as they mostly work on time series 1D signals~\cite{garrett2024validating,Xie2025discovering}. Our work addresses these gaps by developing an unbalanced OT (UOT) framework based on the heat kernel for evaluating GCMs on $\S^2$, providing both theoretical guarantees and practical tools for this important application.

\paragraph{Contributions.}
Our contributions can be summarized as follows:
\begin{itemize}
    \item We study entropically regularized balanced and unbalanced OT and Sinkhorn divergences. Using the heat kernel as a replacement for the Gibbs kernel associated with the squared geodesic distance, we show that this cost converges to the optimal transport cost in the small-time limit.  In the case of the 2-sphere, we moreover show that it satisfies the key properties required by the Sinkhorn divergence framework.
\item We exploit the spherical harmonic transform to derive \textsc{SHOT}, an efficient Sinkhorn algorithm for measures on $\S^2$ via fast convolutions against the heat kernel, enabling scalable computation of Sinkhorn divergences, with performance validated on synthetic data.
\item We apply our framework to the evaluation of global climate models, comparing the precipitation fields of CMIP6 models against ERA5 reanalysis data. We show that our unbalanced OT approach captures both global and regional biases in model performance, and provides insights into seasonal differences in model skill across different latitudinal bands.
\end{itemize}

\section{Background}\label{sec:background}
Let $\M$ be a compact, connected Riemannian manifold and $C:\M^2 \to \mathbb{R}_+$ be a measurable cost function, which we will typically take to be the squared geodesic distance $d^2$ on $\M$.
\begin{definition}[Optimal Transport] Let $P$ and $Q$ be two probability measures on $\M$.
	The \emph{optimal transport} (OT) problem is
	\begin{equation}\label{eq:ot}
	\mathrm{OT}_C(P,Q) := \inf_{\pi \in \U(P,Q)} \int_{\M^2} C(x,y)\, \mathrm{d}\pi(x,y).
	\end{equation}
  where $\U(P,Q)$ is the set of \emph{couplings} between $P$ and $Q$, i.e., joint probability measures on $\M^2$ with marginals $P$ and $Q$.
\end{definition}

Existence of optimal couplings is well understood~\cite{villani2009optimal,santambrogio2015optimal}.
The choice $C=d^2$ gives $\mathrm{OT}_{d^2}=W_2^2$, the squared $2$-Wasserstein distance.
While the OT problem~\eqref{eq:ot} is convex, it is typically not strictly convex and may
admit multiple optimal couplings, which complicates both analysis and computation.
Moreover, solving~\eqref{eq:ot} exactly between two discrete measures with support of size $n$
requires $\mathcal{O}(n^3)$ operations~\cite{peyre2019computational}, which is prohibitive at scale.
A further limitation is that~\eqref{eq:ot} requires $P$ and $Q$ to have equal total mass,
a constraint often violated in practice when comparing unnormalized densities, point clouds
of different sizes, or measures corrupted by outliers. Both issues are addressed simultaneously by the following formulation.

\begin{definition}[Entropic Unbalanced OT]
For $\varepsilon \geq 0$ and $\tau \in (0, +\infty]$, the \emph{entropic unbalanced OT}
between two positive Radon measures $P$ and $Q$ on $\M$ is
\begin{equation}\label{eq:uot}
  \mathrm{UOT}_{C,\varepsilon}^{\tau}(P,Q)
  := \inf_{\pi \in \mathcal{R}_+(\M^2)}
  \int_{\M^2} C\, \mathrm{d}\pi
  + \varepsilon\, \mathrm{KL}(\pi \mid P \otimes Q)
  + \tau\left( \mathrm{KL}(\pi^1 \mid P)
  + \mathrm{KL}(\pi^2 \mid Q)\right),
\end{equation}
where $\mathcal{R}_+(\M^2)$ is the set of positive Radon measures on $\M^2$, $\mathrm{KL}$ is the Kullback-Leibler divergence (with the convention $+\infty \cdot \mathrm{KL}(\cdot\mid\cdot) = \iota_{\{0\}}(\mathrm{KL}(\cdot\mid\cdot))$) and $\pi^1$ and $\pi^2$ are the first and second marginals of $\pi$, respectively.
\end{definition}

The parameter $\tau$ interpolates between two regimes. At $\tau = +\infty$, exact marginal constraints are enforced, yielding \emph{balanced} entropic OT (and~\eqref{eq:ot} as $\varepsilon \to 0$). For finite $\tau > 0$, the problem becomes \emph{unbalanced}: mass creation or destruction is allowed at a cost proportional to KL divergence from each marginal, with $\tau \to 0$ recovering unconstrained transport.
We write $\mathrm{OT}_{C,\varepsilon} \coloneqq \mathrm{UOT}_{C,\varepsilon}^{+\infty}$ for the balanced case and $\mathrm{UOT}_{C}^{\tau} := \mathrm{UOT}_{C,0}^{\tau}$ for the unregularized unbalanced OT.
A key advantage of~\eqref{eq:uot} is that it admits an efficient Sinkhorn iterative solver with cost $\mathcal{O}(n^2)$ for $n$ support points.

\paragraph{Sinkhorn Divergence.}

Although computationally efficient, entropic regularization introduces a systematic bias: in general, $\operatorname{UOT}_{C,\varepsilon}^{\tau}(Q,Q) \neq 0$. The Sinkhorn divergence~\eqref{eq:sinkhorn-divergence} corrects this bias by subtracting the self-transport terms.

\begin{definition}[\cite{sejourne2019sinkhorn}]
For $\varepsilon > 0$, the \emph{Sinkhorn divergence} (SD) between $P, Q \in \mathcal{R}_+(\M)$ is defined as
\begin{equation}
\label{eq:sinkhorn-divergence}
S_{C,\varepsilon}^{\tau}(P,Q)
\coloneqq \operatorname{UOT}_{C,\varepsilon}^{\tau}(P,Q)
- \tfrac{1}{2}\left(\operatorname{UOT}_{C,\varepsilon}^{\tau}(P,P)
+\operatorname{UOT}_{C,\varepsilon}^{\tau}(Q,Q)\right) + \tfrac{\varepsilon}{2}(m(P)-m(Q))^2,
\end{equation}
where $m(\cdot)$ denotes the total mass of the measure on the manifold $\M$.
\end{definition}

The SD inherits desirable theoretical properties: it defines a symmetric, positive definite, smooth loss function that is convex in each of its input variables and metrizes the convergence in law when some conditions on the space and the cost function are satisfied \cite[Theorem 5 \& 6]{sejourne2019sinkhorn}.

\paragraph{Convolution Optimal Transport} Every iteration of the Sinkhorn algorithm requires the computation of the convolution of a function $f\in L^2(\M)$ against $\mathcal{K}_{C,\varepsilon}(x,y) = \exp(-C(x,y)/\varepsilon)$. For a general kernel $K:\M^2\to\mathbb{R}$, the convolution is defined as the integral operator
\begin{equation}\label{eq:kernel-conv}
  Kf(x) = \int_\mathcal{M} K(x,y)\, f(y)\,\mathrm{d}\sigma(y).
\end{equation}
Although the straightforward way to compute it has a quadratic complexity in the size of the discretization, more efficient algorithms exist in some situations. Indeed, in Euclidean space, the kernel $\mathcal{K}_{d^2,\varepsilon}$ is the standard Gaussian kernel, in such a way that each iteration only requires a simple Gaussian convolution, which can be computed with complexity $\mathcal{O}(n\log n)$ for a grid of size $n$.

On a general Riemannian manifold, the Gibbs kernel $\mathcal{K}_{d^2,\varepsilon}$ has a very similar role, smoothing functions according to the geodesic distance. However a fast implementation of the convolution against it does not exist. \cite{solomon2015convolutional} sidestep this difficulty by approximating
$\mathcal{K}_{d^2,\varepsilon}$ with the heat kernel. 

For $t > 0$, the \emph{heat kernel} $\mathcal{H}_t: \M^2 \to \mathbb{R}$ is defined by the property that, for every $y \in \mathcal{M}$, the function $(t,x) \mapsto \mathcal{H}_t(x,y)$ is the minimal positive solution of the heat equation $\partial_t u(t,x) = \Delta u(t,x)$, with initial condition $u(0,x) = \delta_y(x)$ in the sense of distributions, and where $\Delta$ is the Laplace-Beltrami operator on $\M$.
The heat kernel is smooth, symmetric, and, as $\M$ is complete, strictly positive
for all $t>0$~\cite[Thm.~7.20, Cor.~8.12]{grigoryan2009heat}.

The approximation of the Gibbs kernel with the heat kernel is justified by Varadhan's asymptotic formula~\cite{varadhan1967}, which implies that $\mathcal{H}_{\varepsilon/4} \approx \mathcal{K}_{d^2,\varepsilon} $ for small $\varepsilon$.
This makes it possible to approximate $\mathrm{OT}_{d^2,\varepsilon}$ by
replacing the Gibbs kernel with $\mathcal{H}_{\varepsilon/4}$ in the convolutions involved in Sinkhorn iterations.

With this substitution, each Sinkhorn iteration amounts to diffusing a function
over $\mathcal{M}$ for a short time interval $\varepsilon/4$.
On meshes or two-dimensional manifolds, heat diffusion can be discretized via an
implicit Euler step, reducing each iteration to solving a sparse linear system
involving the discrete Laplace-Beltrami operator.
Pre-factoring this system via a sparse Cholesky decomposition amortizes the cost
across iterations. However, sparse Cholesky is not GPU friendly, as its computation follows an irregular dependency structure which limits parallelism. Also, its sparse memory access patterns lead to poor memory coalescing, preventing efficient use of GPU bandwidth and throughput, hindering its use in large scale OT problems.

\section{Spherical Harmonic Optimal Transport}
\subsection{Theoretical framework}
\paragraph{Heat kernel and UOT.}
We now formalize the connection between heat kernel-regularized OT and classical OT. For $\varepsilon > 0$, define the smooth \emph{time-$\varepsilon$-cost} $\widehat C_\varepsilon  := -\varepsilon \log \mathcal{H}_{\frac{\varepsilon}{4}}$.
The heuristic approximation proposed by \cite{solomon2015convolutional} suggests that $\mathrm{OT}_{d^2,\varepsilon}$ can be approximated by $\mathrm{OT}_{\widehat C_\varepsilon, \varepsilon/4}$. Our main result establishes that this approximation becomes exact as $\varepsilon \to 0$ in the balanced and unbalanced cases. This is not a direct consequence of standard entropic OT convergence results (e.g. those of~\cite{carlier2017convergence, nutz2021introduction, carlier2023convergence}), as the cost function itself varies with $\varepsilon$.

\begin{theorem}\label{thm:conv-heat-reg}
	For all $P, Q \in \mathcal{R}_{+}(\M)$ and $\tau \in (0, +\infty]$ we have
	\[ \lim_{\varepsilon \to 0} \mathrm{UOT}_{\widehat C_\varepsilon, \varepsilon/4}^{\tau}(P,Q) = \mathrm{UOT}_{d^2}^{\tau}(P,Q). \]
\end{theorem}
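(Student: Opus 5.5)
Our approach is to sandwich $\mathrm{UOT}_{\widehat C_\varepsilon,\varepsilon/4}^{\tau}(P,Q)$ between two quantities that both tend to $\mathrm{UOT}_{d^2}^{\tau}(P,Q)$, with one of them (the lower bound) obtained through $\Gamma$-convergence and compactness. The key input is Varadhan's formula in its uniform form on a compact manifold: $-4t\log \mathcal{H}_t(x,y)\to d^2(x,y)$ uniformly on $\M^2$ as $t\to 0$. With $t=\varepsilon/4$ this gives $\widehat C_\varepsilon \to d^2$ uniformly, so $\delta_\varepsilon:=\|\widehat C_\varepsilon-d^2\|_\infty\to 0$. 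We take this uniform version for granted; it is classical on compact manifolds, and for $\S^2$ it can also be checked from the explicit eigen-expansion. Total masses are finite since $\M$ is compact and $P,Q$ are Radon.

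\textbf{Upper bound.} Fix a plan $\pi$ that is $\eta$-optimal for $\mathrm{UOT}_{d^2}^{\tau}(P,Q)$. We may assume it has finite cost; when $\tau=\infty$ this means $\pi\in\U(P,Q)$, which is nonempty once $m(P)=m(Q)$, and when $m(P)\neq m(Q)$ both sides equal $+\infty$. In general $\pi$ is not absolutely continuous with respect to $P\otimes Q$, so we replace it by a block approximation $\pi_\eta$. Partition $\M$ into cells of diameter at most $\eta$, and on each product of cells $A_i\times B_j$ set $\pi_\eta$ proportional to $P|_{A_i}\otimes Q|_{B_j}$, carrying the mass $\pi(A_i\times B_j)$. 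This construction keeps the marginals, so it does not change the KL marginal terms. It also satisfies $\mathrm{KL}(\pi_\eta\mid P\otimes Q)<\infty$ and $|\int d^2\,\mathrm{d}\pi_\eta-\int d^2\,\mathrm{d}\pi|\le 2\eta\,\mathrm{diam}(\M)\,m(\pi)$. We then plug $\pi_\eta$ into the regularized problem:
\[
\mathrm{UOT}_{\widehat C_\varepsilon,\varepsilon/4}^{\tau}(P,Q)\le \int d^2\,\mathrm{d}\pi_\eta+\delta_\varepsilon\, m(\pi_\eta)+\tfrac{\varepsilon}{4}\mathrm{KL}(\pi_\eta\mid P\otimes Q)+\tau(\cdots).
\]
Taking $\limsup_{\varepsilon\to0}$ and then letting $\eta\to0$ gives the upper bound.

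\textbf{Lower bound.} Take near-minimizers $\pi_\varepsilon$. Entropy is nonnegative after adding the normalizing constant $m(\pi)-m(P)m(Q)$, or we use the convention that makes $\mathrm{KL}\ge 0$ for positive measures. Together with $\widehat C_\varepsilon\ge d^2-\delta_\varepsilon$, this yields
\[
\mathrm{UOT}_{\widehat C_\varepsilon,\varepsilon/4}^{\tau}\ge \int d^2\,\mathrm{d}\pi_\varepsilon+\tau(\mathrm{KL}(\pi_\varepsilon^1\mid P)+\mathrm{KL}(\pi_\varepsilon^2\mid Q))-\delta_\varepsilon m(\pi_\varepsilon)-o(1).
\]
Comparing with the upper bound shows that the marginal KL terms stay bounded, so the masses $m(\pi_\varepsilon)$ are bounded by superlinearity of $\mathrm{KL}$. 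Since $\M^2$ is compact, Prokhorov lets us extract a weakly converging subsequence $\pi_\varepsilon\to\pi$. The map $\pi\mapsto\int d^2\,\mathrm{d}\pi$ is weakly continuous because $d^2$ is continuous, and KL is weakly lower semicontinuous, so in the limit $\pi$ is admissible and the $\liminf$ is at least $\mathrm{UOT}_{d^2}^{\tau}(P,Q)$. In the balanced case the marginal constraints pass to the limit directly.

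\textbf{Main obstacle.} We expect the main obstacle to be the KL normalization for unbalanced measures, namely ensuring that the entropic term is bounded below uniformly, together with the accompanying mass bound. We would handle this with the generalized definition $\mathrm{KL}(\pi\mid\mu)=\int(\log\frac{d\pi}{d\mu}-1)\,\mathrm{d}\pi+m(\mu)$, which is nonnegative.
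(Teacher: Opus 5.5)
Your architecture is essentially the paper's: uniform Varadhan convergence, a block approximation for the $\limsup$, and a superlinear mass bound plus Prokhorov and lower semicontinuity for the $\liminf$. Using $\eta$-optimal plans instead of exact minimizers is harmless, and it even spares you the existence results the paper invokes. The gap is in the upper bound when $\tau<\infty$. Your claim that the block construction ``keeps the marginals, so it does not change the KL marginal terms'' is false. The first marginal of $\pi_\eta=\sum_{i,j}\frac{\pi(A_i\times B_j)}{P(A_i)Q(B_j)}P|_{A_i}\otimes Q|_{B_j}$ is
\[
\pi_\eta^1=\sum_i \frac{\pi^1(A_i)}{P(A_i)}\,P|_{A_i},
\]
which equals $\pi^1$ only if $\mathrm{d}\pi^1/\mathrm{d}P$ is $P$-a.e.\ constant on every cell. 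This holds in the balanced case, where $\pi^1=P$. An unbalanced plan generally has a nonconstant marginal density, so the term $\tau(\cdots)$ in your displayed inequality cannot simply be evaluated at $\pi$.

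The missing step is to show that the marginal terms can only decrease. Write $\rho=\mathrm{d}\pi^1/\mathrm{d}P$. The density of $\pi_\eta^1$ is the $P$-average of $\rho$ on each cell. Jensen's inequality for the convex function $\varphi(t)=t\log t-t+1$ gives $P(A_i)\,\varphi\bigl(\frac{1}{P(A_i)}\int_{A_i}\rho\,\mathrm{d}P\bigr)\le\int_{A_i}\varphi(\rho)\,\mathrm{d}P$. Summing over $i$ yields $\mathrm{KL}(\pi_\eta^1\mid P)\le\mathrm{KL}(\pi^1\mid P)$, and likewise for the second marginal. This is exactly item (1) of the paper's block approximation lemma.

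Three further points need attention:
\begin{itemize}
\item You need $\pi^1\ll P$ and $\pi^2\ll Q$, which follow from finiteness of the marginal KL terms of your finite-cost plan. This guarantees $\pi(A_i\times B_j)=0$ whenever $P(A_i)Q(B_j)=0$. Otherwise the normalization is undefined and $m(\pi_\eta)=m(\pi)$ fails.
\item Your near-optimal plan depends on $\eta$, so you should note that its mass stays bounded as $\eta\to0$ (superlinearity again). Alternatively, decouple the near-optimality parameter from the cell size.
\item Two slips are cosmetic: the oscillation of $d^2$ on a cell is bounded by $4\eta\,\mathrm{diam}(\M)$, not $2\eta\,\mathrm{diam}(\M)$. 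The normalizing term in the generalized KL is $-m(\pi)+m(P)m(Q)$, as your last paragraph correctly states.
\end{itemize}
With these repairs your proof goes through.
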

The proof is provided in the appendix \ref{appendix:proof_convergence}. This rigorously justifies heat kernel-based Sinkhorn as an efficient proxy for classical OT.
We note $\hat{S}^\tau_\varepsilon := S^\tau_{\widehat C_{\varepsilon},\varepsilon/4}$ for brevity ($\hat{S}_\varepsilon$ in the balanced case).


The following result justifies the use of the Sinkhorn divergence theory in our setting \textit{i.e.} $\M=\S^{2}$, ensuring $\hat{S}^\tau_\varepsilon$ retains its desirable properties.
The proof is deferred to Appendix~\ref{app:proof:sinkh-cond}. 

\begin{proposition}\label{prop:sinkh:cond}
For any $\varepsilon > 0$ and $\tau \in (0, \infty]$, $\hat{S}^\tau_\varepsilon$ defines a symmetric, positive definite, smooth function that is convex in each of its input variables and metrizes the convergence in law.
\end{proposition}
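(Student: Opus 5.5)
The plan is to verify the hypotheses of \cite[Theorem 5 \& 6]{sejourne2019sinkhorn} for the cost $\widehat C_\varepsilon$ with regularization $\varepsilon/4$ on $\M=\S^2$; the conclusions of Proposition~\ref{prop:sinkh:cond} then follow directly. Those hypotheses are three: (i) the space is compact; (ii) the cost is symmetric and Lipschitz (or smooth); (iii) the associated Gibbs kernel $k_\varepsilon(x,y)=\exp(-\widehat C_\varepsilon(x,y)/(\varepsilon/4))$ is \emph{universal} and positive definite, as in \cite{pmlr-v89-feydy19a}. Condition (i) holds trivially since $\S^2$ is compact. For (ii), $\mathcal{H}_{\varepsilon/4}$ is smooth, symmetric and strictly positive \cite{grigoryan2009heat}. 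On a compact domain it is therefore bounded below by a positive constant, so $\widehat C_\varepsilon=-\varepsilon\log\mathcal{H}_{\varepsilon/4}$ is smooth, hence Lipschitz, and symmetric.

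The key step is (iii). Write $k_\varepsilon = \exp(-4\widehat C_\varepsilon/\varepsilon)=\mathcal{H}_{\varepsilon/4}^{4}$. The heat kernel on $\S^2$ has the Mercer expansion
\[
\mathcal{H}_t(x,y)=\sum_{\ell\ge 0} e^{-\ell(\ell+1)t}\,\frac{2\ell+1}{4\pi}\,P_\ell(\langle x,y\rangle),
\]
where $P_\ell$ are Legendre polynomials. Equivalently, $\mathcal{H}_t=\sum_{\ell,m}e^{-\ell(\ell+1)t}Y_\ell^m(x)Y_\ell^m(y)$ by the addition theorem. All coefficients are strictly positive, so $\mathcal{H}_t$ is strictly positive definite, and its RKHS contains every spherical harmonic. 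Since spherical harmonics are dense in $C(\S^2)$, the kernel is universal.

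The power $k_\varepsilon=\mathcal{H}^4$ is positive definite by the Schur product theorem. To get universality, I would use Schoenberg's characterization: a zonal kernel $\sum_\ell a_\ell P_\ell(\langle x,y\rangle)$ on $\S^2$ is universal iff $a_\ell>0$ for all $\ell$. Products of zonal kernels are zonal. By the Legendre linearization (Adams–Neumann) formula, $P_\ell P_{\ell'}$ expands with \emph{nonnegative} coefficients, and the coefficient of $P_{\ell}$ in $P_0P_\ell$ is $1$. Hence every Legendre coefficient of $\mathcal{H}^4$ is at least a positive multiple of $a_0^3a_\ell>0$. Alternatively, one can use that the RKHS of a product of kernels contains the products of functions from each RKHS, which includes constants times harmonics. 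Either route gives universality, and universality also yields the characteristic property needed for positive definiteness of $\hat S^\tau_\varepsilon$.

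This step is the main obstacle. The remaining care is to match exactly the form required in \cite{sejourne2019sinkhorn}: there the kernel is $e^{-C/\varepsilon'}$ with $\varepsilon'=\varepsilon/4$, so the relevant kernel is indeed $\mathcal{H}^4$ and not $\mathcal{H}$. One should also check that their smoothness and metrization statements need only a bounded Lipschitz cost on a compact set, which we have established. For $\tau=\infty$ I would invoke the balanced version \cite{pmlr-v89-feydy19a}, where the same kernel conditions apply. Symmetry of $\hat S^\tau_\varepsilon$ follows at once from the symmetry of $\widehat C_\varepsilon$.
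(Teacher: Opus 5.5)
Your proposal is correct. Its overall skeleton matches the paper's: verify compactness, Lipschitz regularity of $\widehat C_\varepsilon$, and positive definiteness plus universality of the Gibbs kernel, then invoke \cite[Theorem 5 \& 6]{sejourne2019sinkhorn}. Lipschitz regularity comes from smoothness of $\widehat C_\varepsilon$ on a compact manifold, since $\mathcal{H}_{\varepsilon/4}>0$.

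The routes differ in step (iii). The paper states the result for an arbitrary compact connected symmetric space and outsources (iii) to \cite{steinert2025universal}: a nonnegative spectral density gives positive definiteness, and their Theorem~3 gives universality of the heat kernel. You instead argue directly on $\S^2$, using the Mercer expansion $\sum_\ell e^{-\ell(\ell+1)t}\frac{2\ell+1}{4\pi}P_\ell(\langle x,y\rangle)$, strict positivity of every coefficient, and density of spherical harmonics in $C(\S^2)$. This is self-contained and elementary, at the cost of the paper's generality. For this you only need the easy direction (all $a_\ell>0$ implies universality), which follows from your observation that the RKHS contains every $Y_{\ell m}$. No appeal to a Schoenberg-type characterization is needed.

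The more substantive difference is which kernel gets checked. The paper's proof verifies $e^{-\widehat C_\varepsilon/\varepsilon}=\mathcal{H}_{\varepsilon/4}$, which corresponds to regularization $\varepsilon$. That is what Algorithm~\ref{alg:sinkhorn} implements, with potentials $\varepsilon\log\vector{u}$. However, the definition $\hat S^\tau_\varepsilon := S^\tau_{\widehat C_\varepsilon,\varepsilon/4}$, read literally, has Gibbs kernel $e^{-4\widehat C_\varepsilon/\varepsilon}=\mathcal{H}_{\varepsilon/4}^4$. You noticed this and treated the fourth power explicitly, via the Schur product theorem together with nonnegativity of the Legendre linearization coefficients. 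Your argument therefore covers both readings of the statement.

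Your alternative route for the power is also worth keeping. The constant function lies in the RKHS of $\mathcal{H}_t$, so the RKHS of $\mathcal{H}_t^4$ contains $f\cdot 1\cdot 1\cdot 1$ for every $f$ in the RKHS of $\mathcal{H}_t$. This works on any compact manifold, so it would also close the same gap in the paper's symmetric-space version. The Legendre-linearization route, by contrast, is specific to $\S^2$.
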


Taken together, Theorem~\ref{thm:conv-heat-reg} and Proposition~\ref{prop:sinkh:cond}
provide a solid theoretical foundation for heat kernel-regularized OT on $\S^2$.
We therefore propose to leverage spherical harmonics, which diagonalize the
Laplace-Beltrami operator on $\S^2$ and reduce heat diffusion to pointwise
multiplication in frequency space, enabling fully parallelizable, GPU-friendly
Sinkhorn iterations.

\paragraph{Spherical Harmonics.}

Functions in \(L^2(\S^2)\) admit an orthonormal basis given by the \emph{spherical harmonics (SH)} \(\{Y_{\ell m}\}_{\ell \ge 0,\, |m| \le \ell}\). Any \(f \in L^2(\S^2)\) admits the expansion
\[
f = \sum_{\ell=0}^{\infty} \sum_{m=-\ell}^{\ell} \mathcal{F}[f](\ell, m)\, Y_{\ell m},
\qquad
\mathcal{F}[f](\ell, m) = \langle f, Y_{\ell m} \rangle_{L^2(\S^2)}.
\]
The transformation \(f \mapsto \mathcal{F}[f]\) is the \emph{spherical harmonic transform (SHT)}. We refer to Appendix \ref{app:spherical-harmonics} for a brief review.

On the sphere, convolution is defined via rotations rather than translations. Following \cite{driscoll1994computing}, given functions $f, \kappa \in L^2(\S^2)$, their spherical convolution is defined as: 

\begin{equation}
    (f \star \kappa)(x)
    = \int_{\mathrm{SO}(3)} f(R\vec{n}) \, \kappa(R^{-1}x) \, \mathrm{d}R,
    \quad \forall\, x \in \S^2,
    \label{eq:spherical-conv}
\end{equation}
where $\vec{n} = (0,0,1)^\top$ denotes the north pole and $\mathrm{d}R$ the (unnormalized) volume measure on $\mathrm{SO}(3)$. 

The spherical convolution admits a convolution theorem.

\begin{theorem}[Spherical Convolution Theorem~\cite{driscoll1994computing}]\label{thm:spherical-conv-theorem}
Let $f, \kappa \in L^2(\S^2)$.
Then
\begin{equation}
\forall l \geq 0, \forall |m| \le \ell, ~	\mathcal{F}[f \star \kappa](\ell,m)
	= 2\pi \sqrt{\frac{4\pi}{2\ell+1}}
	\, \mathcal{F}[f](\ell,m)
	\, \mathcal{F}[\kappa](\ell,0).
	\label{eq:spherical-conv-theorem}
\end{equation}
\end{theorem}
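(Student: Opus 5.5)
The plan is to expand both $f$ and $\kappa$ in spherical harmonics and use how harmonics transform under rotations, namely through Wigner $D$-matrices. Write $\Lambda(R)g(x) = g(R^{-1}x)$ for the left-regular action. The standard identity is
\[
Y_{\ell m}(R^{-1}x) = \sum_{|m'|\le \ell} \overline{D^{\ell}_{m m'}(R)}\, Y_{\ell m'}(x),
\]
up to a fixed convention, and the $D^\ell_{mm'}$ are the matrix coefficients of the irreducible representations of $\mathrm{SO}(3)$. The value at the north pole is $Y_{\ell m'}(\vec n) = \sqrt{(2\ell+1)/(4\pi)}\,\delta_{m'0}$, so $f(R\vec n)$ reduces to a sum involving only the column $D^{\ell}_{m0}$.

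First, substitute $f = \sum_{\ell,m} \mathcal F[f](\ell,m) Y_{\ell m}$ and $\kappa = \sum_{\ell',k}\mathcal F[\kappa](\ell',k)Y_{\ell' k}$ into~\eqref{eq:spherical-conv}. Using the rotation identity, $f(R\vec n)$ becomes $\sum_{\ell,m}\mathcal F[f](\ell,m)\sqrt{(2\ell+1)/4\pi}\,\overline{D^\ell_{0m}(R)}$, possibly after inverting $R$; the exact index placement depends on the convention. Similarly, $\kappa(R^{-1}x) = \sum_{\ell',k,k'}\mathcal F[\kappa](\ell',k)\,\overline{D^{\ell'}_{kk'}(R)}\,Y_{\ell' k'}(x)$.

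Second, integrate over $\mathrm{SO}(3)$ using the Schur orthogonality relations
\[
\int_{\mathrm{SO}(3)} D^{\ell}_{mn}(R)\,\overline{D^{\ell'}_{m'n'}(R)}\,\mathrm dR = \frac{8\pi^2}{2\ell+1}\,\delta_{\ell\ell'}\delta_{mm'}\delta_{nn'},
\]
with $8\pi^2$ the unnormalized volume of $\mathrm{SO}(3)$. This collapses the double sum to $\ell'=\ell$ and forces $k=0$ together with the matching of the remaining indices to $m$. The resulting coefficient of $Y_{\ell m}$ is
\[
\mathcal F[f](\ell,m)\,\mathcal F[\kappa](\ell,0)\,\sqrt{\tfrac{2\ell+1}{4\pi}}\,\tfrac{8\pi^2}{2\ell+1} = 2\pi\sqrt{\tfrac{4\pi}{2\ell+1}}\,\mathcal F[f](\ell,m)\,\mathcal F[\kappa](\ell,0),
\]
which is exactly~\eqref{eq:spherical-conv-theorem}.

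Third, justify interchanging the infinite sums with the integral. Both functions lie in $L^2$, so the expansions converge in $L^2(\S^2)$, and $R\mapsto f(R\vec n)$ lies in $L^2(\mathrm{SO}(3))$ with norm $\sqrt{2\pi}\|f\|_{L^2(\S^2)}$, since the map $\mathrm{SO}(3)\to\S^2$ is a fibration with circle fibers. I would prove the identity first for finite band-limited $f,\kappa$, where everything is a finite sum. Continuity then extends it: by Cauchy--Schwarz, $(f,\kappa)\mapsto \mathcal F[f\star\kappa](\ell,m)$ is continuous for the $L^2$ norms, since $|f\star\kappa(x)|\le \sqrt{2\pi}\|f\|\,\sqrt{2\pi}\|\kappa\|$. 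Density of band-limited functions then gives the result.

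I expect the main obstacle to be bookkeeping rather than anything conceptual. The conventions for $D^\ell$ (complex conjugation, whether $R$ or $R^{-1}$ appears, and the ordering of indices) must be chosen consistently so that the surviving index is $\mathcal F[\kappa](\ell,0)$ and not its conjugate or $\mathcal F[\kappa](\ell,m)$. I would fix the convention by checking the final formula against the zonal case, with $\kappa$ rotation-invariant about $\vec n$ and $\ell=0$: there the constant $2\pi\sqrt{4\pi}$ follows directly from $\int_{\mathrm{SO}(3)} \mathrm dR = 8\pi^2$.
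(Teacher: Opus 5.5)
The paper gives no proof of this statement; it is imported from Driscoll and Healy. Your argument therefore has to stand on its own, and in substance it does. The constant comes out right, since $\sqrt{(2\ell+1)/4\pi}\cdot 8\pi^2/(2\ell+1)=2\pi\sqrt{4\pi/(2\ell+1)}$. The density step is also sound. You only justified the $f$ factor, so record the $\kappa$ one too: for fixed $x$, inversion- and right-invariance of $\mathrm{d}R$ give $\int_{\mathrm{SO}(3)}|\kappa(R^{-1}x)|^2\,\mathrm{d}R=2\pi\|\kappa\|^2$. Hence $|(f\star\kappa)(x)|\le 2\pi\|f\|\,\|\kappa\|$ for every $x$, and two continuous bilinear forms that agree on band-limited pairs agree on all of $L^2\times L^2$.

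The one thing to repair is the bookkeeping you flagged: as written, your two expansions come from different conventions.
\begin{itemize}
\item Under your stated rule $Y_{\ell m}(R^{-1}x)=\sum_{m'}\overline{D^\ell_{mm'}(R)}\,Y_{\ell m'}(x)$, apply it to $R^{-1}$ and use unitarity, $D^\ell(R^{-1})=D^\ell(R)^{*}$ (conjugate transpose). This gives $Y_{\ell m}(R\vec n)=\sqrt{(2\ell+1)/4\pi}\,D^\ell_{0m}(R)$ with \emph{no} bar. For example, with $R=R_z(\alpha)R_y(\beta)$ and $\ell=m=1$, the bar turns the correct factor $e^{i\alpha}$ into $e^{-i\alpha}$.
\item With the bar you wrote, the integrand carries two conjugated coefficients, so Schur orthogonality does not apply as stated. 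Rewriting $\overline{D^\ell_{ab}}$ as a phase times $D^\ell_{-a,-b}$ then sends $\mathcal F[f](\ell,m)$ onto $Y_{\ell,-m}$, which is the wrong formula.
\item In any single convention exactly one factor is conjugated. In the paper's convention these are $\overline{D^\ell_{m0}(R)}$ from $f$ and $D^{\ell'}_{k'k}(R)$ from $\kappa$, and Schur then forces $\ell'=\ell$, $k=0$, $k'=m$.
\end{itemize}
Your proposed $\ell=0$ sanity check cannot catch this error, because $D^0\equiv 1$; it only pins down the constant.

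If you want to sidestep Wigner conventions entirely, there is a shorter route.
\begin{enumerate}
\item Right-invariance of $\mathrm{d}R$ under the stabilizer $\{R_z(\alpha)\}$ of $\vec n$ shows $f\star\kappa=f\star\kappa_0$. Here $\kappa_0$ is the average of $\kappa$ over rotations about the $z$-axis, and its coefficients are $\delta_{m0}\,\mathcal F[\kappa](\ell,0)$.
\item Then $\kappa_0(R^{-1}x)=\sum_\ell \mathcal F[\kappa](\ell,0)\sqrt{(2\ell+1)/4\pi}\,P_\ell(\langle x,R\vec n\rangle)$.
\item The addition theorem $P_\ell(\langle x,y\rangle)=\frac{4\pi}{2\ell+1}\sum_m Y_{\ell m}(x)\overline{Y_{\ell m}(y)}$ separates the variables.
\item Your fibration identity $\int_{\mathrm{SO}(3)}f(R\vec n)\overline{Y_{\ell m}(R\vec n)}\,\mathrm{d}R=2\pi\,\mathcal F[f](\ell,m)$ finishes the computation.
\end{enumerate}
This route makes the appearance of $\mathcal F[\kappa](\ell,0)$ transparent and has no index placement to get wrong. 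It is essentially Proposition~\ref{prop:kern-to-fourier} read backwards plus Funk--Hecke. Your Schur-orthogonality route is heavier, but it is the one that carries over to convolution theorems on other homogeneous spaces.
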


A key observation is that the heat kernel is \emph{radial}, i.e.\ depends only on $\langle x,y\rangle$. For such kernels, the generic kernel operator \eqref{eq:kernel-conv} reduces to a spherical convolution \eqref{eq:spherical-conv}. This contribution differs from prior work  \cite{kondor2018generalization, sosnovik2021disco} in that it provides a direct link between the two convolution types, enabling the use of SHTs for efficient implementation of Sinkhorn iterations with the heat kernel. The proof is provided in Appendix~\ref{app:proof:kern-to-fourier}.

\begin{proposition}\label{prop:kern-to-fourier}
	Let $K$ be a radial kernel on $\S^2 \times \S^2$ of the form $K(x,y) = K_0(\langle x, y \rangle)$ with $K_0 \in L^2([-1,1])$. Then for any $f \in L^2(\S^2)$, writing $\tilde K(z) = K(z, \vec{n})$ with $\vec{n}$ the north pole:
	\[
	\forall x \in \S^2, \quad (Kf)(x) = \frac{1}{2\pi} \big( f \star \tilde K \big)(x).
	\]
\end{proposition}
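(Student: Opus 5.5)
The plan is to compute the right-hand side directly from the definition \eqref{eq:spherical-conv} and reduce it to the integral \eqref{eq:kernel-conv} by a change of variables on $\mathrm{SO}(3)$.

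\textbf{Step 1: use radiality to move the rotation onto the pole.} Since $\tilde K(z) = K_0(\langle z,\vec n\rangle)$ and rotations preserve the inner product,
\[
\tilde K(R^{-1}x) = K_0(\langle R^{-1}x,\vec n\rangle) = K_0(\langle x, R\vec n\rangle).
\]
Hence
\[
(f\star\tilde K)(x)=\int_{\mathrm{SO}(3)} f(R\vec n)\,K_0(\langle x,R\vec n\rangle)\,\mathrm dR ,
\]
so the integrand depends on $R$ only through $y = R\vec n$. Write it as $g(R\vec n)$ with $g(y) = f(y)K_0(\langle x,y\rangle)$.

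\textbf{Step 2: push the Haar measure forward to the sphere.} Consider the map $\pi:\mathrm{SO}(3)\to\S^2$, $R\mapsto R\vec n$. The pushforward $\pi_\#(\mathrm dR)$ is a rotation-invariant finite measure on $\S^2$, because left translation commutes with $\pi$ and the Haar measure is left-invariant. By uniqueness of rotation-invariant measures on $\S^2$, it equals $c\,\sigma$ for some constant $c$. This gives
\[
\int_{\mathrm{SO}(3)} g(R\vec n)\,\mathrm dR = c\int_{\S^2} g(y)\,\mathrm d\sigma(y).
\]
The constant is fixed by the fibration: each fiber is the stabilizer of $\vec n$, a circle of length $2\pi$ in the normalization where $\mathrm{vol}(\mathrm{SO}(3)) = 8\pi^2 = 2\pi\cdot 4\pi$. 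So $c = 2\pi$, and $(f\star\tilde K)(x) = 2\pi\,(Kf)(x)$, which is the claim.

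Equivalently, Step 2 can be done concretely with ZYZ Euler angles $R = R_z(\alpha)R_y(\beta)R_z(\gamma)$, with $\mathrm dR = \sin\beta\,\mathrm d\alpha\,\mathrm d\beta\,\mathrm d\gamma$. Then $R\vec n$ depends only on $(\alpha,\beta)$, and the $\gamma$-integral contributes exactly the factor $2\pi$. This matches the normalization implicit in Theorem~\ref{thm:spherical-conv-theorem}, which is taken from \cite{driscoll1994computing}.

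\textbf{Step 3: integrability.} For each $x$, we need $g\in L^1(\S^2)$ so that the change of variables is legitimate. This follows from Cauchy--Schwarz:
\[
\int_{\S^2} |K_0(\langle x,y\rangle)|^2\,\mathrm d\sigma(y) = 2\pi\int_{-1}^1 |K_0(s)|^2\,\mathrm ds < \infty ,
\]
by the Archimedes/Funk--Hecke formula for zonal functions, and $f\in L^2(\S^2)$.

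The main obstacle is identifying the normalization constant $c$ correctly. The "unnormalized volume measure" must be the one giving total mass $8\pi^2$, for consistency with the factor $2\pi$ in Theorem~\ref{thm:spherical-conv-theorem}. I would verify this by taking $f\equiv 1$ and $K_0\equiv 1$: the left side is then $4\pi$, and the right side is $\mathrm{vol}(\mathrm{SO}(3))/(2\pi)$.
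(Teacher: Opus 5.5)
Your proposal is correct and follows essentially the paper's route: radiality gives $\tilde K(R^{-1}x)=K_0(\langle x,R\vec n\rangle)$, and the Euler-angle factorization ($R\vec n$ independent of the last angle, whose integral contributes $2\pi$ with $\mathrm{d}R=\sin\theta\,\mathrm{d}\phi\,\mathrm{d}\theta\,\mathrm{d}\psi$) converts the $\mathrm{SO}(3)$ integral into $2\pi$ times the integral over the sphere. Your Haar-pushforward phrasing of that step and your Cauchy--Schwarz/Funk--Hecke integrability check are sound additions that the paper leaves implicit.
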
\label{sec:theory}
\subsection{Algorithm}
Each SH \,$Y_{\ell m}$ is an eigenfunction of the Laplace-Beltrami operator \textit{i.e.}
$-\Delta Y_{\ell m} = \ell(\ell+1)\, Y_{\ell m}$,
so the heat operator acts diagonally: $\mathcal{F}[\mathcal{H}_t \star f](\ell,m) = e^{-t\ell(\ell+1)}\,\mathcal{F}[f](\ell,m)$. By Proposition~\ref{prop:kern-to-fourier} and the Spherical Convolution Theorem~\ref{thm:spherical-conv-theorem}, applying the heat kernel to a function reduces to a SHT, pointwise multiplication by the precomputed weights $\{e^{-t\ell(\ell+1)}\}_\ell$, and an inverse SHT. We thus define the \emph{heat convolution} as
\[
  \mathrm{HeatConv}(\vector{v};\,\{h_\ell\}) := \mathcal{F}^{-1}\bigl(\{h_\ell\} \odot \mathcal{F}(\vector{v})\bigr),
  \qquad h_\ell = e^{-t\ell(\ell+1)},
\]
where $\odot$ denotes pointwise multiplication over SH coefficients.  For numerical stability, classical stabilization techniques such as centering dual variables~\cite{sejourne2023unbalanced} can be applied.
This yields Algorithm~\ref{alg:sinkhorn}. Note that in order to match the standard Sinkhorn algorithm with regularization parameter $\varepsilon$, one must set the heat time to $t = \varepsilon/4$, for the theoretical reasons discussed in Section~\ref{sec:theory}.

\paragraph{Discrete Setting and Complexity.}

In order to perform numerical computations, we discretize the sphere and represent
probability distributions as vectors. We consider two discretizations: the HEALPix
grid~\cite{gorski2005healpix}, which provides a hierarchical, equal-area partitioning
of the sphere, and the equiangular Clenshaw-Curtis grid~\cite{clenshaw1960} (see Appendix~\ref{appendix:discretization_sphere} for more details on the discretizations).

Given a discretization $\{x_i\}_{i=1}^n$ with area weights $\vector{a} \in \mathbb{R}^n_+$,
we approximate integrals as $\int_{\S^2} f\,\mathrm{d}\sigma \approx \vector{a}^\top \vector{f}$ (see Appendix~\ref{appendix:discretization_ot}).
The usual Sinkhorn algorithm requires a memory cost of $\mathcal{O}(n^2)$ and a computational cost of $\mathcal{O}(n^2)$ per iteration.
By contrast, our approach requires only $\mathcal{O}(n)$ memory
for both grids, since no kernel matrix is ever formed.
On the Clenshaw-Curtis grid with $n = \mathcal{O}(L^2)$ (where $L$ is the spherical harmonic bandwidth), each SHT costs $\mathcal{O}(L^3) = \mathcal{O}(n^{3/2})$
operations; HEALPix with $n = 12N_\mathrm{side}^2$ (where $N_\mathrm{side}$ is the resolution parameter) achieves the same asymptotic
cost via pseudo-spectral methods.
In both cases, all operations, SHTs and pointwise multiplications, are dense and
parallelizable, making the algorithm fully GPU-friendly. We provide further details on the practical implementation in Appendix~\ref{appendix:implementation_details}.

\section{Experiments}
\subsection{Runtime comparison}

\paragraph{Experimental setup.}
We benchmark all methods in the balanced case (\textit{i.e.} $\tau = \infty$), as some baselines are restricted to this setting. We consider pairs of random probability measures on $\S^2$, generated by normalizing a uniform random field with respect to the area weights of the grid.
Experiments are run on a single NVIDIA A40 GPU with \texttt{float64} precision. Each runtime is averaged over 5 independent runs.
We consider both grids using powers of 2: $L \in \{8, \ldots, 512\}$ for the equiangular (Clenshaw-Curtis) grid (up to ${\approx}523{,}000$ points) and $N_\mathrm{side} \in \{8, \ldots, 256\}$ for HEALPix.
We compare six methods.
The first three use entropic regularization with $\varepsilon = 0.1$ and convergence threshold $10^{-9}$: \textsc{SHOT} (ours), \textsc{Sinkhorn} (with matrix kernel convolutions), and \textsc{Cholesky} \cite{solomon2015convolutional} (convolutional approach with a pre-factored Cholesky decomposition of the Laplace-Beltrami operator, reimplemented in \texttt{PyTorch} \cite{paszke2019pytorch}).
Two sliced methods using 500 projections: \textsc{SSW} \cite{bonetSphericalSlicedWasserstein2022} and \textsc{LSSOT} \cite{liuLinearSphericalSliced2024}.
Finally, $W_2^2$ solves the exact OT linear program on pairwise geodesic distances via \texttt{POT} \cite{POT}.
Note that for \textsc{SHOT} the restriction to the balanced case does not affect much the behavior of the method, as the only difference between the balanced and unbalanced formulations is an additional pointwise power step applied at each Sinkhorn iteration.

\paragraph{Results.}

\begin{figure}[t]
    \centering
    \includegraphics[width=0.8\textwidth]{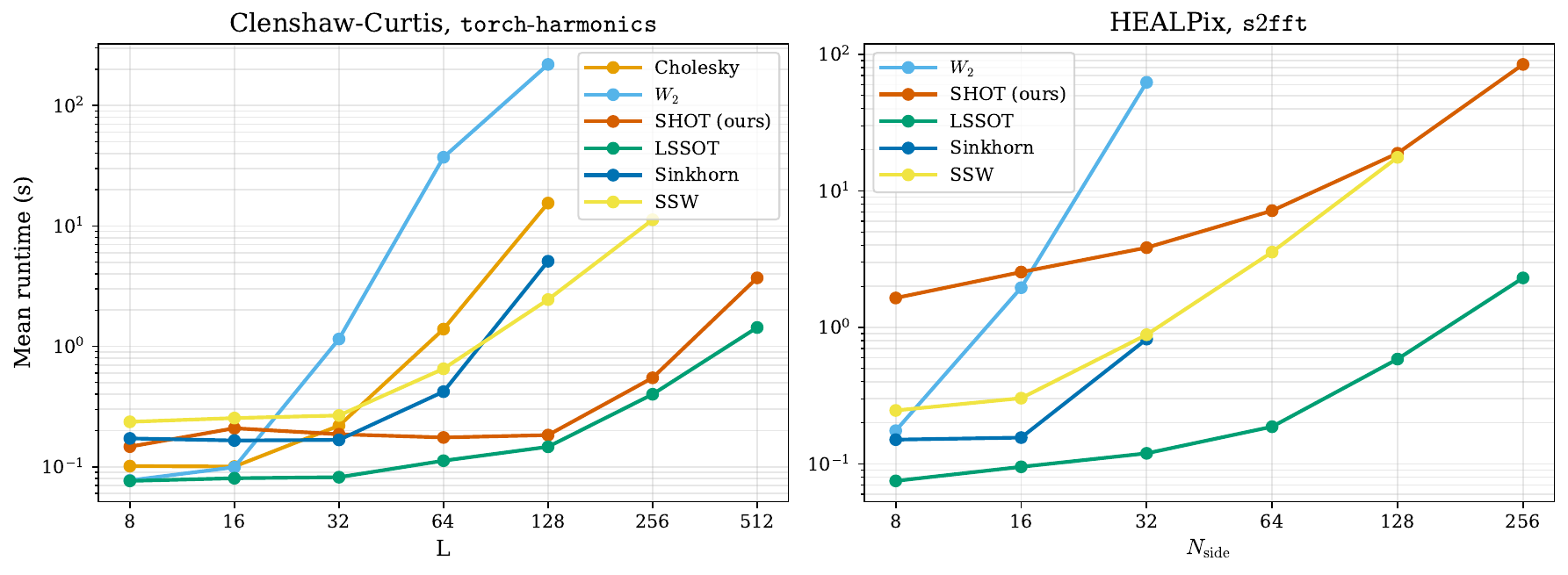}
    \caption{Mean runtime (seconds) vs.\ grid resolution averaged over 5 runs; missing values indicate out-of-memory.
\emph{Left}: Clenshaw-Curtis grid.
\emph{Right}: HEALPix grid.}
    \label{fig:runtime}
\end{figure}

A key advantage of \textsc{SHOT} is its ability to scale to very high resolutions on both grids without running out of memory.
As visible in Figure~\ref{fig:runtime}, methods relying on explicit kernel matrices (\textsc{Sinkhorn}, \textsc{Cholesky}, $W_2^2$) hit out-of-memory errors at moderate resolutions, whereas \textsc{SHOT} successfully runs at the largest resolutions considered. On the Clenshaw-Curtis grid, \textsc{SHOT} is the second fastest method for $L \geq 64$, slightly behind \textsc{LSSOT}.
On the HEALPix grid, however, \textsc{SHOT} does not achieve the same relative speedup despite its favorable theoretical complexity.
We attribute this gap to implementation differences: spherical convolutions on the Clenshaw-Curtis grid are performed via \texttt{torch-harmonics}~\cite{bonev2023spherical}, whereas on HEALPix they rely on \texttt{s2fft}~\cite{price2024differentiable}, which do not provide optimized CUDA kernels. We provide more runtime comparisons in Appendix~\ref{appendix:runtime}, showing that this behavior is consistent across regularization parameters $\varepsilon$ and grid types.

\subsection{Evaluating Climate Models on the Sphere}

\subsubsection{Setup and Motivation}
\label{sec:setup}

We apply our \textsc{SHOT} method to compute the associated SD $\hat{S}^\tau_\varepsilon$ introduced in Section \ref{sec:theory} to evaluate eight CMIP6 models against the
ERA5 reanalysis~\cite{hersbach2020era5}, which we treat as the
observational reference. In this study, we focus on precipitation fields which are represented as
discrete measures on the \textsc{HEALPix} grid~\cite{gorski2005healpix}
at resolution $N_\mathrm{side} = 64$ ($\sim$49\,152 equal-area pixels,
$\approx 1$\textdegree effective resolution), which provides an
equal-area pixelation of $\mathbb{S}^2$ directly compatible with our
algorithm. ERA5 data is accessed via the HERA5
dataset\footnote{\url{https://orcestra-campaign.org/hera5.html}}, already provided in \textsc{HEALPix} format,
while CMIP6 model outputs are retrieved from the Pangeo Google Cloud
archive\footnote{\url{https://pangeo-data.github.io/pangeo-cmip6-cloud/}} and regridded to the same
\textsc{HEALPix} grid via bilinear interpolation. All fields are
converted to mm/day. We analyze two
climatological seasons: December-January-February~(DJF) and
June-July-August~(JJA), each computed as a temporal average over
the 2000--2014 historical period. Visual illustrations of the resulting
precipitation distributions for ERA5 and one model (ACCESS-CM2) are shown in
Appendix~\ref{appendix:iclimate}.

\paragraph{Usefulness of UOT.}
CMIP6 models exhibit systematic global mean precipitation biases
relative to ERA5, so the total mass of the two measures differs. A
balanced transport plan is then forced to compensate for these
intensity differences through spatial displacement, confounding the
two sources of error. The SD $\hat{S}^\tau_\varepsilon$ yields a
principled decomposition of model bias into a \emph{spatial}
component (transport of precipitation from wrong locations) and an
\emph{intensity} component (local over- or under-estimation of
amounts), which is not achievable with any pointwise metric.
The key advantage of \textsc{SHOT} is that it can efficiently compute $\hat{S}^\tau_\varepsilon$ at high resolution while other methods like \textsc{Cholesky} and \textsc{Sinkhorn} require extensive computational ressources to compute their associated SD at the considered resolution. We set $\varepsilon = 0.01$ (squared geodesic distance on the unit sphere) throughout; results are qualitatively stable over $\varepsilon \in [0.01, 1]$.

\subsubsection{Global Model Ranking and Regional Bias Attribution}
\label{sec:ranking}

\paragraph{Scalar ranking.}
For each model $m$ and season, we compute the balanced $\hat{S}_\varepsilon$ and unbalanced  $\hat{S}^\tau_\varepsilon$ (with $\tau=100$) Sinkhorn
divergences between the precipitation measures $\mu_m$ and the reference $\mu_\mathrm{ERA5}$. In the balanced case, both measures are normalized to share the same total mass, while in the unbalanced case they are left unchanged. The resulting scalar score provides a
geometrically meaningful ranking of models that accounts for both the
spatial structure and, in the unbalanced case, the intensity of precipitation
biases. Unlike RMSE, this score is a proper divergence as it equals zero if and
only if $\mu_m = \mu_\mathrm{ERA5}$ a.e. and it metrizes weak
convergence on $\mathbb{S}^2$, meaning that a model improving its
score is genuinely moving its precipitation distribution closer to
the reference in a transport sense.

\begin{wrapfigure}{r}{0.45\linewidth}
  \centering\includegraphics[width=\linewidth]{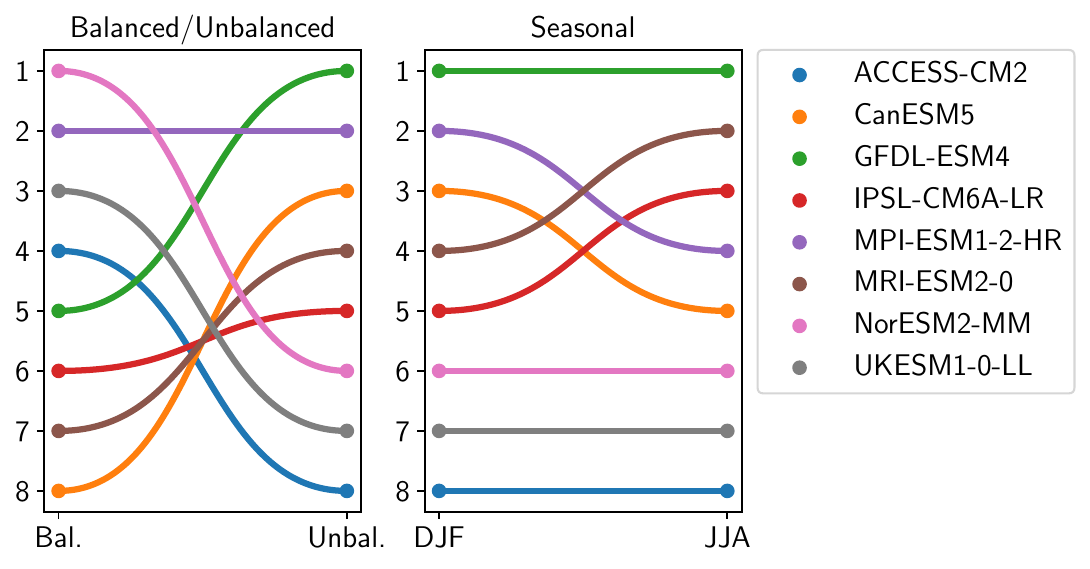}
  \caption{
Model rankings from SD. \textit{Left:} Balanced vs.\ unbalanced Sinkhorn divergences (DJF). \textit{Right:} DJF vs.\ JJA unbalanced rankings.
  }
  \label{fig:rankings}
\end{wrapfigure}

Interestingly, rankings exhibit severe changes between the balanced and unbalanced cases.
Figure~\ref{fig:rankings} (panel 1) shows
differences in those rankings for the two considered configurations
and for all eight considered models in the DJF season. Large rank shifts reveal
models whose scores under balanced OT are dominated by global
intensity biases rather than spatial fidelity: GFDL-ESM4 improves
from rank~5 to rank~1 (intensity bias masking good spatial
structure), while NorESM2-MM degrades from rank~1 to rank~6
(balanced OT artificially rewarding compensating errors).
MPI-ESM1-2-HR is the most metric-stable top performer (rank~2
in both). Figure~\ref{fig:rankings} (panel 2) exhibits seasonal
comparison of unbalanced rankings (DJF vs.\ JJA). Four models have
perfectly stable rankings across seasons, identifying them as the models with the most
season-independent spatial fidelity and bias. Models improving
in JJA (MRI-ESM2-0, IPSL-CM6A-LR) suggest boreal-summer specific improvements in spatial precipitation
structure. We then turn to the question of \emph{where} on the globe
these improvements occur, which is not revealed by the scalar rankings but can be addressed
by the gradient-based regional attribution described next.

\paragraph{Gradient-based regional attribution.}
The SD scalar score summarizes model quality but
does not reveal \emph{where} on the globe the bias originates. To
address this, we exploit the fact that $\hat{S}^\tau_\varepsilon$
is differentiable with respect to $\mu_m$. Its gradient
$g_m = \nabla_{\mu_m} \hat{S}^\tau_\varepsilon(\mu_m,
\mu_\mathrm{ERA5})$ is a signed field on $\mathbb{S}^2$, derived from
the Kantorovich potentials of the dual problem, where $g_m(x) > 0$
indicates local excess precipitation mass and $g_m(x) < 0$ indicates
deficit relative to the optimal transport plan. Since Kantorovich
potentials are defined up to an additive constant, we center $g_m$ by
subtracting its area-weighted mean, retaining only the spatially
varying component.

\begin{figure}
  \centering\includegraphics[width=0.6\linewidth]{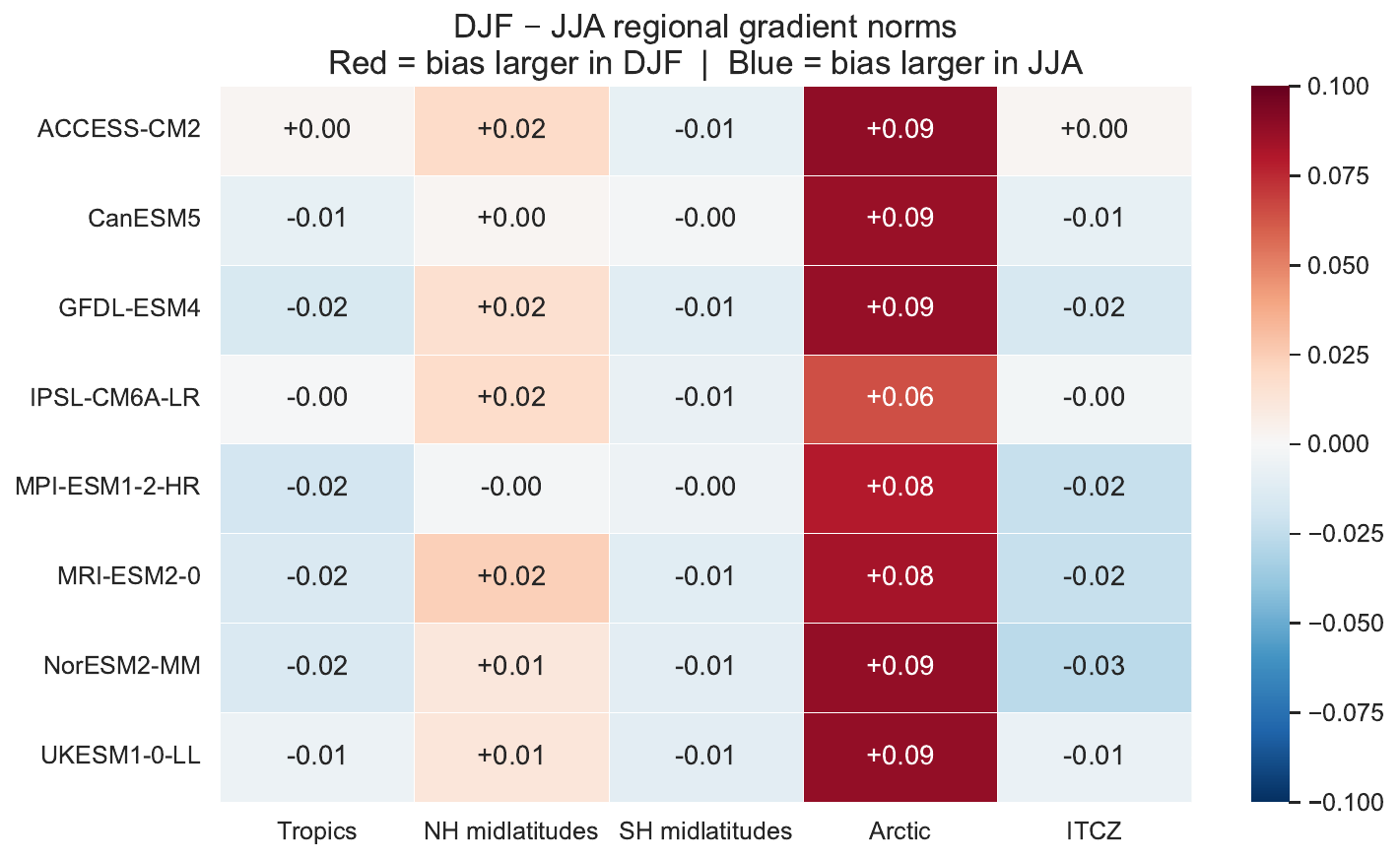}
  \caption{
    DJF$-$JJA difference of regional OT gradient norms under the
    unbalanced Sinkhorn divergence. Red: larger bias in DJF; blue: larger bias in JJA.
  }
  \label{fig:seasonal_diff}
\end{figure}

We partition $\mathbb{S}^2$ into five non-overlapping geographic
regions: Tropics ($|$lat$|<23.5^\circ$), ITCZ ($|$lat$|<10^\circ$), NH
Midlatitudes ($30^\circ$--$60^\circ$N), SH Midlatitudes ($30^\circ$--$60^\circ$S), and
Arctic ($>66.5^\circ$N), and compute the area-weighted RMS of $g_m$
over each region as a regional bias index. We provide in Appendix~\ref{appendix:iclimate},
Figure~\ref{fig:regional_gradients} these indices for all eight models and both seasons, normalized by the global maximum to preserve inter-model differences. We now precisely focus on the seasonal contrast ({\em i.e.} difference, denoted by $\Delta$ in the following) of these regional indices, which is not visible in the scalar rankings but reveals physically interpretable patterns of model bias and improvement. The Arctic column in Figure~\ref{fig:seasonal_diff} is the \emph{only} region with a meaningful seasonal signal,
showing consistently larger DJF bias across all eight models
($\Delta = +0.06$ to $+0.09$). All other regions show
differences of $|\Delta| \leq 0.02$, confirming that tropical,
midlatitude, and ITCZ precipitation biases are structurally
season-independent. The slight inter-model
variation in Arctic seasonal contrast (IPSL-CM6A-LR at $+0.06$
versus most others at $+0.09$) is physically interpretable as
reflecting differences in sea-ice parameterization complexity, as
discussed below.

\subsubsection{Arctic Bias and Sea-Ice Coupling}
\label{sec:arctic}

The seasonal collapse of the Arctic gradient norm, from a strongly
discriminating signal in DJF to near-zero in JJA across all models,
points to a specific physical mechanism: Arctic precipitation in
boreal winter is tightly coupled to sea-ice dynamics, through
moisture fluxes over the marginal ice zone, polar vortex
interactions, and storm track activity~\cite{serreze2019arctic}.
We investigate this coupling directly by overlaying the ERA5
climatological sea-ice edge (15\% concentration contour) on the
gradient maps of the two most contrasting models: ACCESS-CM2
(largest Arctic seasonal contrast, $\Delta = 0.089$) and
IPSL-CM6A-LR (smallest contrast, $\Delta = 0.065$). 
Figure~\ref{fig:arctic} shows the centered gradient $g_m$ over the Arctic in NorthPolarStereo projection for both models and both seasons, with the DJF and JJA sea-ice edges overlaid.

\begin{figure}[t]
  \centering
  \includegraphics[width=\linewidth]{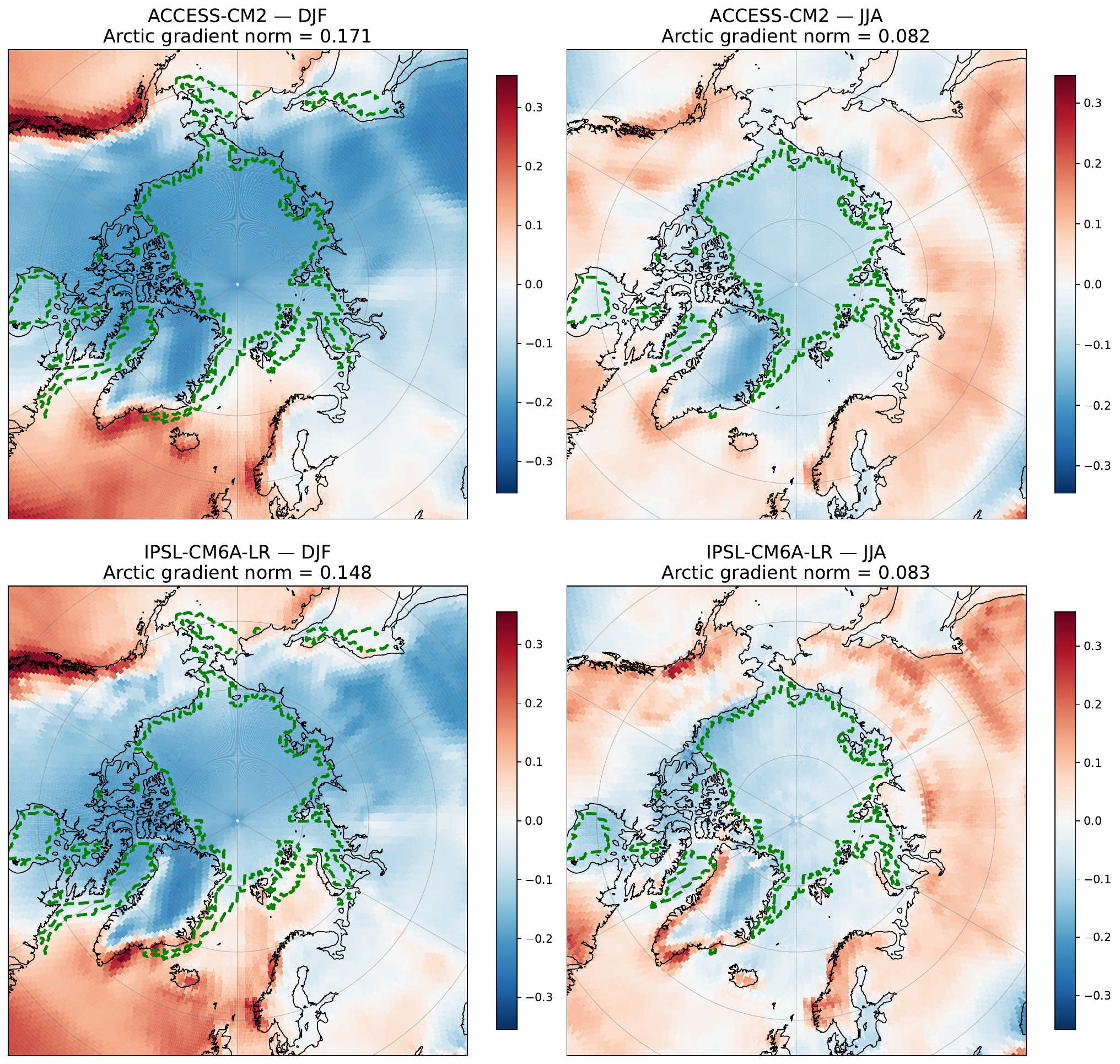}
  \caption{
    OT gradient of the Sinkhorn divergence (centered) over the Arctic
    for ACCESS-CM2 (top) and IPSL-CM6A-LR (bottom), in DJF (left)
    and JJA (right). Green contours show the ERA5 climatological
    sea-ice edge at 15\% concentration for DJF and JJA. Red: model has excess precipitation; blue: deficit.
    Arctic gradient norms are reported in the panel titles.
  }
  \label{fig:arctic}
\end{figure}

\paragraph{Results.}
Three findings stand out. First, in DJF the gradient signal is
\emph{spatially organized around the sea-ice edge} in both models: a
precipitation deficit appears inside the ice pack and an excess
appears along the marginal ice zone and just outside it. This dipole
structure is consistent with a systematic equatorward displacement or
overextension of the modeled ice edge, which shifts the
precipitation maximum outward relative to ERA5. The OT gradient
localizes this bias precisely, which a difference map cannot, since
the latter conflates the displacement with a local intensity error.

Second, the two models exhibit qualitatively different bias
geometries in DJF. ACCESS-CM2 shows a large-scale, coherent dipole
spanning the entire Arctic basin (deficit interior, excess
periphery), suggesting a bulk ice-edge displacement bias. IPSL-CM6A-LR
shows a more fragmented, spatially heterogeneous pattern concentrated
along the marginal ice zone, consistent with a fine-scale positional
error rather than a global shift. This distinction, directly readable
from the gradient geometry, is not recoverable from any scalar metric.

Third, in JJA both models converge to similar gradient patterns
(norms $0.082$ and $0.083$ respectively) as the ice retreats and its
organizing influence weakens. The residual JJA signal is
predominantly positive (model overestimates precipitation over the
open Arctic Ocean in summer), consistent with known warm biases in
Arctic sea-surface temperatures in coupled models. The near-identical
JJA patterns confirm that the ACCESS-CM2 vs.\ IPSL-CM6A-LR
difference is specifically a boreal winter phenomenon driven by
sea-ice representation, not a general Arctic precipitation
deficiency.

\paragraph{Summary.}
The experiments demonstrate that spherical unbalanced Sinkhorn
divergences can provide a principled, geometrically faithful, and
physically interpretable framework for climate model evaluation.
Beyond scalar rankings, the differentiability of the divergence
yields gradient maps that attribute model bias to specific geographic
regions and, in the Arctic, reveal a possible mechanistic link to sea-ice
edge geometry that is invisible to standard metrics.



\section{Discussion and Future Work}

In this work, we introduced \textsc{SHOT}, a novel algorithm for computing the Sinkhorn divergence associated to the heat kernel on the sphere. We demonstrated its efficiency and applicability to high-resolution data, and illustrated its potential for evaluating climate models in a geometry-aware manner. However, several limitations and avenues for future work remain.
First, as every entropy regularized OT methods, \textsc{SHOT} is subject to numerical instabilities for small $\varepsilon$, which limits its ability to approximate the true OT cost. To some extent, this can be amplified by
the spiking behavior of the heat kernel, which can produce negative values when truncated to a finite degree. Developing strategies to mitigate these instabilities and extend the operating range of the algorithm is an important direction for future work. One potential remedy is to combine the heat kernel with suitable filters to dampen its spiking behavior. We provide a more detailed discussion of these instabilities and potential solutions in Appendix~\ref{appendix:numerical_instabilites}. On the theoretical side, the framework relies on three main ingredients: Varadhan's formula, diagonalization of the heat operator on the manifold, and a fast associated transform. Extending this approach to other manifolds that share these properties is a natural direction for subsequent works. Finally, since the Sinkhorn divergence associated to the heat kernel is fully differentiable, it could serve as a geometry-aware training objective for neural networks operating on data that naturally lives on the sphere.
Lastly and regarding climate model evaluation, our goal was to illustrate the potential of the method rather than to provide a comprehensive assessment of CMIP6 models, which would require a broader analysis spanning multiple variables, seasons, and historical periods. Future work will explore these dimensions and investigate the use of the method for model development and tuning.

\bibliographystyle{plainnat}
\bibliography{example_paper.bib, climate.bib}

\appendix

\section{Proofs}
\subsection{Proof of Theorem~\ref{thm:conv-heat-reg}}\label{appendix:proof_convergence}
We will need some preliminaries. First, note that by Varadhan's celebrated short-time asymptotics of the heat kernel, one has uniform convergence of the time-$\varepsilon$ cost to the squared geodesic distance~\cite[Theorem 2.2]{varadhan1967} on the compact, connected Riemannian manifold $\M$:
\[
\widehat C_\varepsilon(x,y) = -\varepsilon \log \mathcal{H}_{\frac{\varepsilon}{4}}(x,y) \xrightarrow[\varepsilon \to 0]{\text{uniformly}} d(x,y)^2.
\]

Second, we will need some properties of the KL divergence, notably that it is jointly lower semi-continuous (see e.g.~\cite[Theorem 1]{posner2003random}) and that it is superlinear. For convenience of the reader, we state and reprove that latter statement.

\begin{lemma}[Superlinear lower bound for generalized KL]\label{lem:kl-mass}
Let $\nu \in \mathcal{R}_{+}(\M)$ with $\nu(\M) > 0$. Then for every $\mu \in \mathcal{R}_{+}(\M)$ with $\mu \ll \nu$,
\[
\mathrm{KL}(\mu \mid \nu) \;\geq\; \mu(\M)\,\log\frac{\mu(\M)}{\nu(\M)} - \mu(\M) + \nu(\M).
\]
In particular, the right-hand side is superlinear in $\mu(\M)$.
\end{lemma}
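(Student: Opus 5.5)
The plan is to apply Jensen's inequality to the convex function $\varphi(s) = s\log s - s + 1$, which is the integrand of the generalized KL divergence. For $\mu \ll \nu$ with density $f = \mathrm{d}\mu/\mathrm{d}\nu$, the definition reads
\[
\mathrm{KL}(\mu \mid \nu) = \int_\M \bigl(f\log f - f + 1\bigr)\,\mathrm{d}\nu = \int_\M f\log f\,\mathrm{d}\nu - \mu(\M) + \nu(\M).
\]
This holds with the convention $0\log 0 = 0$, and it is valid whenever $\mu(\M)<\infty$, which is automatic for Radon measures on the compact $\M$. It therefore suffices to show
\[
\int_\M f\log f\,\mathrm{d}\nu \;\geq\; \mu(\M)\log\frac{\mu(\M)}{\nu(\M)}.
\]

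To obtain this, I will normalize $\nu$ to the probability measure $\bar\nu = \nu/\nu(\M)$. This is possible because $\nu(\M)>0$ and $\nu(\M)<\infty$ by compactness. The map $g(s)=s\log s$ is convex on $[0,\infty)$, so Jensen's inequality gives
\[
\int g(f)\,\mathrm{d}\bar\nu \ge g\Bigl(\int f\,\mathrm{d}\bar\nu\Bigr) = \frac{\mu(\M)}{\nu(\M)}\log\frac{\mu(\M)}{\nu(\M)}.
\]
Multiplying by $\nu(\M)$ then yields the claim.

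The only technical point is the integrability needed for Jensen's inequality. The negative part of $g$ is bounded below by $-1/e$, hence integrable against the finite measure $\bar\nu$. If $\int f\log f\,\mathrm{d}\nu=+\infty$, the inequality is trivial. If instead $\mu$ is not absolutely continuous with respect to $\nu$, then $\mathrm{KL}=+\infty$ by convention, but this case is excluded by hypothesis anyway. The degenerate case $\mu(\M)=0$ gives right-hand side $\nu(\M)$ and $\mathrm{KL}(0\mid\nu)=\nu(\M)$, so the bound holds with equality.

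Finally, superlinearity follows because $m\mapsto m\log(m/\nu(\M)) - m + \nu(\M)$ divided by $m$ equals $\log(m/\nu(\M)) - 1 + \nu(\M)/m$, which tends to $+\infty$ as $m\to\infty$. I do not expect a real obstacle in this argument; the only care required is in handling the extended-valued integrals, as described above.
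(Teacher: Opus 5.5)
Your proof is correct and follows the paper's argument: both write $\mathrm{KL}(\mu\mid\nu)=\int f\log f\,\mathrm{d}\nu-\mu(\M)+\nu(\M)$ and apply Jensen's inequality to $s\mapsto s\log s$ with respect to the probability measure $\nu/\nu(\M)$. Your handling of the integrability of the negative part and of the case $\mu(\M)=0$ makes explicit what the paper leaves implicit.
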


\begin{proof}
Applying Jensen's inequality to the convex function $t \mapsto t \log t$ with respect to the probability measure $\nu / b$ gives
\[
\mathrm{KL}(\mu \mid \nu) 
= \int \frac{\mathrm{d}\mu}{\mathrm{d}\nu} \log \frac{\mathrm{d}\mu}{\mathrm{d}\nu}\, \mathrm{d}\nu - \mu(\M) + \nu(\M)
\geq  \mu(\M) \log \frac{ \mu(\M)}{\nu(\M)} - \mu(\M) + \nu(\M). \qedhere
\]
\end{proof}

We will also use the fact that the problem $\mathrm{UOT}_{d^2}^{\tau}$ admits a minimizer. This is a consequence of~\cite[Theorem 3.3]{liero2018optimal} as the conditions of that theorem are satisfied: the problem is \emph{feasible} (take the null transport plan) and the KL divergence is superlinear (as stated just above). 

To the best of our knowledge, no full proof of a similar result exists for entropic regularized UOT: \cite[Theorem 3.3]{buze2023entropic} announce such a result for a future version of their paper but currently none is present while~\cite[Proposition 2.3]{nenna2025convergence} prove existence and uniqueness for a slightly more general context but restrict themselves to the discrete case, using arguments similar to those of the balanced case i.e. essentially the direct method of the calculus of variations and strict convexity of the regularizer.

As we will need such a result to study $\mathrm{UOT}^{\tau}_{\widehat C_{\varepsilon}, \varepsilon/4}$, before proceeding, we remark here that arguing along the same lines is sufficient to establish existence and uniqueness beyond the discrete case : observe first that by lower semicontinuity of the KL divergence, the functional defining regularized UOT is lower semicontinuous. Moreover, for $\varepsilon > 0$, superlinearity of the KL divergence implies that a minimizing sequence must be bounded in mass (see e.g.~\cite[Estimate 3.10]{liero2018optimal}) and thus, by Prokhorov's theorem for finite positive measures applied to the compact space $\M^2$, admits a converging subsequence in the topology of weak convergence of measures (this is essentially stating that the associated functional is \emph{coercive}) and existence of the minimizer follows. That the minimizer is unique derives from the strict convexity of the KL divergence regularizer. 

Finally, we need the following technical lemma (inspired by~\cite[Lemma~5.3]{nutz2021introduction}).

\begin{lemma}[Block approximation]\label{lem:block}
Let \(X\) be a compact metric space, let \(P,Q \in \mathcal{R}_{+}(X)\), and let
$\pi \in \mathcal{R}_{+}(X^2)$
be such that $\mathrm{KL}(\pi^1\mid P)<+\infty$ and $\mathrm{KL}(\pi^2\mid Q)<+\infty.$
Let moreover $f:X^2 \rightarrow \mathbb{R}$ be a continuous function. Then for every \(\eta>0\), there exists $\pi_\eta \in \mathcal{R}_{+}(X^2)$
such that $\mathrm{KL}(\pi_\eta\mid P\otimes Q)<+\infty$, having the same mass as $\pi$ and satisfying
\begin{enumerate}
\item $\mathrm{KL}(\pi_\eta^1\mid P) \le \mathrm{KL}(\pi^1\mid P)$ and $\mathrm{KL}(\pi_\eta^2\mid Q)
\le
\mathrm{KL}(\pi^2\mid Q)$,
\item $
\left|
\int_{X^2} f\, d\pi
-
\int_{X^2} f\, \mathrm{d}\pi_\eta
\right|
\le \eta\, m(\pi).$
\end{enumerate}

Moreover, if \(m(P)=m(Q)\) and \(\pi\) is a coupling between \(P\) and \(Q\), then \(\pi_\eta\) is also a coupling between \(P\) and \(Q\).

\end{lemma}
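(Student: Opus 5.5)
The plan is the standard block approximation. Since $X$ is compact and $f$ is continuous on $X^2$, $f$ is uniformly continuous. Fix $\eta>0$ and choose $\delta>0$ such that $|f(x,y)-f(x',y')|\le\eta$ whenever $d(x,x')<\delta$ and $d(y,y')<\delta$. Compactness then gives a finite Borel partition $X=\bigsqcup_{i=1}^N A_i$ into sets of diameter less than $\delta$; for instance, cover $X$ by finitely many balls of radius $\delta/2$ and disjointify.

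Blocks with $P(A_i)Q(A_j)=0$ need separate treatment. Since $\mathrm{KL}(\pi^1\mid P)<\infty$ forces $\pi^1\ll P$, and likewise $\pi^2\ll Q$, we get $\pi(A_i\times A_j)\le \pi^1(A_i)=0$ whenever $P(A_i)=0$, and similarly in the other coordinate. So every block of positive $\pi$-mass has $P(A_i)Q(A_j)>0$. For those blocks we define
\[
\pi_\eta := \sum_{i,j:\ P(A_i)Q(A_j)>0} \frac{\pi(A_i\times A_j)}{P(A_i)Q(A_j)}\,(P|_{A_i})\otimes(Q|_{A_j}).
\]
Both properties can then be checked directly.

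\emph{Finite entropy.} The measure $\pi_\eta$ has a piecewise-constant bounded density with respect to $P\otimes Q$ and finite total mass, so $\mathrm{KL}(\pi_\eta\mid P\otimes Q)<\infty$. Its mass equals $\sum_{i,j}\pi(A_i\times A_j)=m(\pi)$.

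\emph{Marginals.} The first marginal is $\pi_\eta^1=\sum_i \frac{\pi^1(A_i)}{P(A_i)}P|_{A_i}$, using $\sum_j\pi(A_i\times A_j)=\pi^1(A_i)$ together with the null-block observation. This is the conditional expectation of $\frac{\mathrm{d}\pi^1}{\mathrm{d}P}$ onto the $\sigma$-algebra generated by the partition, under $P$. Conditional Jensen applied to the convex function $\varphi(t)=t\log t-t+1$ gives $\int\varphi\bigl(\frac{\mathrm{d}\pi^1_\eta}{\mathrm{d}P}\bigr)\,\mathrm{d}P\le\int\varphi\bigl(\frac{\mathrm{d}\pi^1}{\mathrm{d}P}\bigr)\,\mathrm{d}P$, that is, $\mathrm{KL}(\pi^1_\eta\mid P)\le\mathrm{KL}(\pi^1\mid P)$. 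Jensen can equally be applied cell by cell with respect to the probability $P|_{A_i}/P(A_i)$, exactly as in Lemma~\ref{lem:kl-mass}. The constant terms match because total masses agree, and cells with $P(A_i)=0$ contribute nothing to either side. The second marginal is handled symmetrically.

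\emph{Cost estimate.} Both $\pi$ and $\pi_\eta$ assign the same mass $\pi(A_i\times A_j)$ to each block $A_i\times A_j$, which has diameter less than $\delta$ in each coordinate. Fix any point $(x_i,y_j)$ in each block. Then
\[
\Bigl|\int f\,\mathrm{d}\pi-\int f\,\mathrm{d}\pi_\eta\Bigr|\le\sum_{i,j}\Bigl(\int_{A_i\times A_j}|f-f(x_i,y_j)|\,\mathrm{d}\pi+\int_{A_i\times A_j}|f-f(x_i,y_j)|\,\mathrm{d}\pi_\eta\Bigr)\le 2\eta\, m(\pi).
\]
Replacing $\eta$ by $\eta/2$ in the choice of $\delta$ gives the claimed bound $\eta\, m(\pi)$.

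\emph{Balanced case.} If $\pi\in\U(P,Q)$, then $\pi^1(A_i)=P(A_i)$, so $\pi_\eta^1=\sum_i P|_{A_i}=P$. Similarly $\pi_\eta^2=Q$, and $\pi_\eta$ is a coupling.

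The only delicate step is the bookkeeping around null cells, where we must justify dropping blocks without losing mass or marginal information. Absolute continuity of the marginals, which follows from their finite KL divergence, is exactly what makes this work. Everything else is routine.
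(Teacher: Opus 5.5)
Your proposal is correct and follows the paper's proof step for step. It uses the same block measure $\pi_\eta$ built on a fine partition, the same null-block argument from $\pi^1\ll P$ and $\pi^2\ll Q$, the same cellwise Jensen inequality for $\varphi(t)=t\log t-t+1$, and the same coupling check. The only cosmetic difference is in item (2): you compare $f$ with a fixed value $f(x_i,y_j)$ on each block and absorb the resulting factor $2$ by halving $\eta$, whereas the paper compares $f$ with its $P\otimes Q$-block average $\bar f_{ij}$ and gets $\eta\, m(\pi)$ directly.
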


\begin{proof}
Let \(\eta>0\). Since $f$ is continuous and \(X^2\) is compact, \(f\) is uniformly continuous. Hence there exists \(\delta>0\) such that
$d(x,x')\le \delta, ~ d(y,y') \le \delta \implies 
|f(x,y)-f(x',y')|
\le \eta.$

By compactness of \(X\), there exists a finite measurable partition
$\{A_1,\dots,A_N\}$
of \(X\) such that
$
\operatorname{diam}(A_i)\le \delta$ for all $i$.
Define
$I^+
:=
\{(i,j): P(A_i)>0,\ Q(A_j)>0\}$ 
and set
\[
\pi_\eta
:=
\sum_{(i,j)\in I^+}
\frac{\pi(A_i\times A_j)}{P(A_i)Q(A_j)}
(P\otimes Q)|_{A_i\times A_j}.
\]
By construction $\pi_\eta \ll P \otimes Q$ and $\mathrm{KL}(\pi_\eta\mid P\otimes Q)<+\infty$ and moreover
\[
\frac{d\pi_\eta}{d(P\otimes Q)}
=
\sum_{(i,j)\in I^+}
\frac{\pi(A_i\times A_j)}{P(A_i)Q(A_j)}
\mathbf 1_{A_i\times A_j}.
\]
We now study the marginals. Since
$\mathrm{KL}(\pi^1\mid P)<+\infty$
we have \(\pi^1\ll P\), thus the Radon-Nikodym derivative $\frac{d\pi^1}{dP}$
exists and for every measurable \(B\subset X\), one has
\[
\pi_\eta^1(B) =
\pi_\eta(B\times X) =
\sum_{(i,j)\in I^+}
\frac{\pi(A_i\times A_j)}{P(A_i)Q(A_j)}
\,P(B\cap A_i)\,Q(A_j)=
\sum_i
\frac{\pi^1(A_i)}{P(A_i)}
\,P(B\cap A_i),
\]
as \(\{A_j\}_j\) is a partition of \(X\). Therefore,
\[
\pi_\eta^1
=
\sum_i
\frac{\pi^1(A_i)}{P(A_i)}
\,P|_{A_i},
\]
and thus
\[
\frac{d\pi_\eta^1}{dP}
=
\sum_i
\left(
\frac1{P(A_i)}
\int_{A_i}\frac{d\pi^1}{dP}\, \mathrm{d}P
\right)\mathbf 1_{A_i}.
\]
Recalling that
\[
\mathrm{KL}(\mu\mid \nu)
=
\int
\varphi\!\left(\frac{d\mu}{d\nu}\right)\mathrm{d}\nu
\text{ with }
\varphi(t):= t\log t - t +1,
\]
the convexity of \(\varphi\) and Jensen's inequality in normalized form yield
\[
\varphi\!\left(
\frac1{P(A_i)}
\int_{A_i}\frac{d\pi^1}{dP}\, \mathrm{d}P
\right)
\le
\frac1{P(A_i)}
\int_{A_i}
\varphi \left(\frac{d\pi^1}{dP}\right) \mathrm{d}P.
\]
Multiplying by \(P(A_i)\) and summing over \(i\), we obtain
\[
\mathrm{KL}(\pi_\eta^1\mid P)
=
\sum_i
P(A_i)
\varphi\!\left(
\frac1{P(A_i)}
\int_{A_i}\frac{d\pi^1}{dP}\, \mathrm{d}P
\right) 
\le
\sum_i
\int_{A_i}
\varphi \left(\frac{d\pi^1}{dP}\right)\, \mathrm{d}P 
=
\mathrm{KL}(\pi^1\mid P).
\]
This proves the first part of (1). The proof of the second part is identical.

We now prove that $\pi$ and $\pi_\eta$ have the same mass. Observe that if \((i,j)\notin I^+\), then
\[
P(A_i)=0
\qquad
\text{or}
\qquad
Q(A_j)=0.
\]
If \(P(A_i)=0\), then since \(\pi^1\ll P\),
\[
\pi(A_i\times A_j)
\le
\pi(A_i\times X)
=
\pi^1(A_i)
=
0,
\]
as \(\pi^1\ll P\).
Similarly, if \(Q(A_j)=0\), then $\pi(A_i\times A_j)=0$.
Consequently $
\pi(A_i\times A_j)=0$
$(i,j)\notin I^+$.
Therefore,
\[
m(\pi_\eta)
=
\sum_{(i,j)\in I^+}
\frac{\pi(A_i\times A_j)}{P(A_i)Q(A_j)}
(P\otimes Q)(A_i\times A_j) 
=
\sum_{(i,j)\in I^+}
\pi(A_i\times A_j) 
=
\sum_{i,j}
\pi(A_i\times A_j) 
=
m(\pi).
\]

Finally, we prove (2). From what precedes, as $\pi(A_i\times A_j)=0$ for all $(i,j)\notin I^+$, one has
\[
\int_{X^2} f\, \mathrm{d}\pi = \sum_{(i,j)\in I^+}
\int_{A_i\times A_j} \int f \mathrm{d}\pi.
\]
Define
\[
\bar f_{ij}
:=
\frac1{P(A_i)Q(A_j)}
\int_{A_i\times A_j}
f\, \mathrm{d}(P\otimes Q).
\]
Then
\[
\int_{X^2} f\, \mathrm{d}\pi_\eta
=
\sum_{(i,j)\in I^+}
\pi(A_i\times A_j)\bar f_{ij},
\]
hence
\[
\int_{X^2} f\, \mathrm{d}\pi
-
\int_{X^2} f\, \mathrm{d}\pi_\eta
=
\sum_{(i,j)\in I^+}
\int_{A_i\times A_j}
(f-\bar f_{ij})\, \mathrm{d}\pi.
\]

Fix \((i,j)\in I^+\) and \((x,y)\in A_i\times A_j\). Then
\[
\begin{aligned}
|f(x,y)-\bar f_{ij}|
&=
\left|
\frac1{P(A_i)Q(A_j)}
\int_{A_i\times A_j}
(f(x,y)-f(x',y'))
\,\mathrm{d}(P\otimes Q)(x',y')
\right| \\
&\le
\frac1{P(A_i)Q(A_j)}
\int_{A_i\times A_j}
|f(x,y)-f(x',y')|
\,\mathrm{d}(P\otimes Q)(x',y').
\end{aligned}
\]
Since the $A_i$ have diameter less than $\delta$ and $(x,y) \in A_i \times A_j$, the uniform continuity property mentioned at the beginning gives, for all $(x,y) \in A_i \times A_j, ~
|f(x,y)-\bar f_{ij}|
\le \eta.
$
Consequently,
\[
\left|
\int_{X^2} f\, \mathrm{d}\pi
-
\int_{X^2} f\, \mathrm{d}\pi_\eta
\right|
\le
\sum_{(i,j)\in I^+}
\int_{A_i\times A_j}
|f-\bar f_{ij}|\, \mathrm{d}\pi 
\le
\eta
\sum_{(i,j)\in I^+}
\pi(A_i\times A_j) 
=
\eta\, m(\pi).
\]

Assume now that \(m(P)=m(Q)\) and that \(\pi\) is a coupling between \(P\) and \(Q\), that is $\pi^1 = P$ and $\pi^2 = Q$. Then $\frac{\pi^1(A_i)}{P(A_i)}=1$
thus $\pi_\eta^1
=
\sum_i P|_{A_i}
=
P$,
since \(\{A_i\}\) is a partition of \(X\). Similarly, $
\pi_\eta^2=Q$ and therefore \(\pi_\eta\) is also a coupling between \(P\) and \(Q\).
\end{proof}

\begin{proof}[Proof of Theorem~\ref{thm:conv-heat-reg}]
We treat first the case $\tau < \infty$. The case $\tau = +\infty$ can be treated by similar arguments, with minor modifications, and is discussed at the end of the proof.
\paragraph{Case 1: $\tau < \infty$.}
On the space $\mathcal{R}_{+}(\M^2)$, consider the functional associated to the $\mathrm{UOT}^{\tau}_{d^2}$ problem:
\[
F(\pi) = \int_{\M^2} d^2 \mathrm{d}\pi + \tau \left( \mathrm{KL}(\pi^1 \mid P)
+ \mathrm{KL}(\pi^2 \mid Q)\right),
\]
and that associated to the heat kernel regularized $\mathrm{UOT}^{\tau}_{\widehat C_{\varepsilon}, \varepsilon/4}$, formally defined for $\varepsilon > 0$ by:
\[
\hat{F}_{\varepsilon}(\pi) = \int_{\M^2} \widehat C_{\varepsilon}\, \mathrm{d}\pi 
+ \tau\left( \mathrm{KL}(\pi^1 \mid P)
+ \mathrm{KL}(\pi^2 \mid Q)\right) + \frac{\varepsilon}{4} \,\mathrm{KL}(\pi \mid P \otimes Q).
\]
By construction, $\mathrm{UOT}^{\tau}_{d^2}(P,Q)$ is the infimum of $F$ while the heat-regularized cost $\mathrm{UOT}^{\tau}_{\widehat C_{\varepsilon}, \varepsilon/4}(P,Q)$ is the infimum value of $F_{\varepsilon}$. Let $\pi^* \in \mathcal{R}_{+}(\M^2)$ be a minimizer of $F^{\tau}$ \emph{i.e.} $F^{\tau}(\pi^*) = \mathrm{UOT}^{\tau}_{d^2}(P,Q)$.

The proof proceeds in two steps: a limsup inequality and a liminf inequality.

\textit{Step 1: Upper bound.} 
We show that
\[
\limsup_{\varepsilon \to 0}\; \mathrm{UOT}^{\tau}_{\widehat{C}_\varepsilon, \varepsilon/4}(P,Q) 
\le \mathrm{UOT}^{\tau}_{d^2}(P,Q) = F(\pi^*).
\]

Note that $\pi^*$ may have $\mathrm{KL}(\pi^* \mid P \otimes Q) = +\infty$; this is why we cannot simply take $\pi^*$ as a competitor for $F^{\tau}_\varepsilon$. Instead, we use the block approximation lemma~\ref{lem:block} with $X = \M, \pi = \pi^\star$ and $f=d^2$. Fixing $\eta > 0$, and letting $\pi_\eta$ be given by that lemma, one has by definition,
\begin{align*}
\mathrm{UOT}^{\tau}_{\widehat{C}_\varepsilon, \varepsilon/4}(P,Q) &\le F^{\tau}_\varepsilon(\pi_{\eta}) 
 \\
&\le \int \widehat{C}_\varepsilon\, \mathrm{d}\pi_{\eta} + \varepsilon \mathrm{KL}(\pi_{\eta}\mid P\otimes Q)
+ \tau\bigl(\mathrm{KL}((\pi^{*})^1\mid P) + \mathrm{KL}((\pi^{*})^2\mid Q)\bigr),
\end{align*}
where the inequality comes from item (1) of the Lemma \ref{lem:block}.
Write the difference in the transport cost:
\begin{align*}
\left|\int \widehat{C}_\varepsilon\, \mathrm{d}\pi_{\eta} - \int d^2\, \mathrm{d}\pi^*\right|
&\le \left|\int (\widehat{C}_\varepsilon - d^2)\, \mathrm{d}\pi_{\eta}\right| + \left|\int d^2\, \mathrm{d}\pi_{\eta} - \int d^2\, \mathrm{d}\pi^*\right| \\
&\le \|\widehat{C}_\varepsilon - d^2\|_\infty m(\pi_\eta) + \eta\, m(\pi^\star) = (\|\widehat{C}_\varepsilon - d^2\|_\infty + \eta) \, m(\pi^\star),
\end{align*}
by item (2) of the Lemma \ref{lem:block} and the fact that $\pi_\eta$ and $\pi^\star$ have the same mass. Consequently, one has, 
\begin{align*}
\mathrm{UOT}^{\tau}_{\widehat{C}_\varepsilon, \varepsilon/4}(P,Q)  \le & (\|\widehat{C}_\varepsilon - d^2\|_\infty + \eta) \, m(\pi^{\star}) \\
& \quad + \int d^2\, \mathrm{d}\pi^* + \varepsilon \mathrm{KL}(\pi_{\eta}\mid P\otimes Q)
+ \tau\bigl(\mathrm{KL}((\pi^{\star})^1\mid P) + \mathrm{KL}((\pi^{\star})^2\mid Q)\bigr). 
\end{align*}

As $\eta$ is fixed independently of $\varepsilon$, we can let $\varepsilon \to 0$ and get, by Varadhan's formula
\begin{align*}
\limsup_{\varepsilon \to 0}\; \mathrm{UOT}^{\tau}_{\widehat{C}_\varepsilon, \varepsilon/4}(P,Q) 
&\le \eta m(\pi^{\star}) + \int d^2\, \mathrm{d}\pi^* +\tau\bigl(\mathrm{KL}((\pi^{\star})^1\mid P) + \mathrm{KL}((\pi^{\star})^2\mid Q)\bigr) \\
&= \eta m(\pi^{\star}) + \mathrm{UOT}^{\tau}_{d^2}(P,Q).
\end{align*}

As $\eta > 0$ was arbitrary, we obtain the claimed upper bound.

\medskip
\textit{Step 2: Lower bound.}
We show that 
\[
\mathrm{UOT}^{\tau}_{d^2}(P,Q) = F(\pi^*)
\le
\liminf_{\varepsilon\to0}
\mathrm{UOT}^{\tau}_{\widehat C_{\varepsilon},\varepsilon/4}(P,Q).
\]

For each \(\varepsilon>0\), let \(\pi_\varepsilon\in\mathcal R_+(\M^2)\) be a minimizer of \(\hat F_\varepsilon\).

From the first step, one sees that the limsup of the family $\{\mathrm{UOT}^{\tau}_{\widehat{C}_\varepsilon, \varepsilon/4}(P,Q)\}_{\varepsilon}$ is finite and as a consequence, that family is bounded, say by $\kappa > 0$. 

By Varadhan's uniform convergence $\widehat C_\varepsilon\to d^2\ge 0$ on the compact manifold $\M$, there exists $\varepsilon_{0}>0$ such that $\widehat C_\varepsilon\ge -1$ for all $\varepsilon\le\varepsilon_0$. Since $\mathrm{KL}(\pi_\varepsilon^2\mid Q)\ge 0$ and $\mathrm{KL}(\pi_\varepsilon\mid P\otimes Q)\ge 0$, we may drop these terms and apply Lemma~\ref{lem:kl-mass} to the first marginal to obtain
\[
\tau\Bigl(m(\pi_\varepsilon^1)\log\frac{m(\pi_\varepsilon^1)}{b} - m(\pi_\varepsilon^1) + b\Bigr)
\;\le\;
\tau\,\mathrm{KL}(\pi_\varepsilon^1\mid P)
\;\le\;
\kappa + \,m(\pi_\varepsilon^1).
\]
Since the left-hand side is superlinear and the right-hand side is affine in $m(\pi_\varepsilon^1)$, we deduce that $m(\pi_\varepsilon)=m(\pi_\varepsilon^1)$ is bounded uniformly for $\varepsilon \le \varepsilon_{0}$ i.e. the family $(\pi_{\varepsilon})_{\varepsilon \le \varepsilon_{0}}$ is bounded in mass.

By definition of the \(\liminf\), there exists a sequence \((\varepsilon_k)_k\) converging to $0$ such that 
\[
\mathrm{UOT}^{\tau}_{\widehat{C}_k,\varepsilon_k/4}(P,Q)
\longrightarrow
\liminf_{\varepsilon\to0}
\mathrm{UOT}^{\tau}_{\widehat C_{\varepsilon},\varepsilon/4}(P,Q).\]
We let $k_0$ be such that $\varepsilon_{k}\leq \varepsilon_0$ for $k\geq k_0$ and henceforth work with the sequence $(\pi_{k})_{k \geq k_0}$. To simplify notation, we write $\widehat{C}_{\varepsilon_k}:=\widehat{C}_k, F_{\varepsilon_{k}} = F_k$ and $\pi_k:=\pi_{\varepsilon_k}$.

We now apply Prokhorov's theorem on $\mathcal{R}_{+}(\M^2)$ : as $\M^2$ is compact and $(\pi_{k})_{k \ge k_{0}}$ is bounded in mass, the sequence $(\pi_{k})_{k \geq k_0}$ is relatively compact in $\mathcal{R}_{+}(\M^2)$. Extract a subsequence $\pi_{k_n}$ that converges weakly to some $\pi\in\mathcal{R}_+(\M^2)$.

By definition of $\pi^{\star}$ we have
\[
\mathrm{UOT}^\tau_{d^2}(P,Q) = F(\pi^{\star}) \le F(\pi) = \int_{\M^2} d^2 \mathrm{d}\pi + \tau \left(\mathrm{KL}(\pi^1 \mid P)
+ \mathrm{KL}(\pi^2 \mid Q)\right).
\]
Now, by lower semicontinuity of the KL divergence and continuity of the marginal-taking operation
\[
\mathrm{KL}(\pi^1 \mid P) = \mathrm{KL}((\lim_{n} \pi_{k_{n}})^{1} \mid P) = \mathrm{KL}(\lim_{n} \pi_{k_{n}}^{1} \mid P) \le \lim_{n} \mathrm{KL}(\pi_{k_{n}}^1 \mid P)
\]
and similarly for the second marginal. Moreover, as the KL divergence is nonnegative, we get
\[
F(\pi) \le \int_{\M^2} d^2 \mathrm{d}\pi + \lim_{n} \Bigg( \tau \left(\mathrm{KL}(\pi_{k_{n}}^1 \mid P)
+ \mathrm{KL}(\pi_{k_{n}}^2 \mid Q)\right) + \frac{\varepsilon_{k_n}}{4} \,\mathrm{KL}(\pi_{k_{n}} \mid P \otimes Q) \Bigg).
\]
It remains to examine the first term. Write
\[
\int_{\M^2}  \widehat{C}_{k_n}\, \mathrm{d}\pi_{k_n}
= \int_{\M^2} d^2\, \mathrm{d}\pi_{k_n} + \int_{\M^2} ( \widehat{C}_{k_n} - d^2)\, \mathrm{d}\pi_{k_n}.
\]
By the uniform convergence of $\widehat{C}_k$ to $d^2$ and the uniform bound on $m(\pi_{k_n})$, as $n \to +\infty$,
\[
\biggl|\int_{\M^2}(\widehat{C}_{k_n} - d^2)\, \mathrm{d}\pi_{k_n}\biggr|
\le \|\widehat{C}_{k_n} - d^2\|_\infty \cdot m(\pi_{k_n}) \longrightarrow 0.
\]
Since $d^2$ is continuous on the compact $\M^2$ and $\pi_{k_n} \rightharpoonup \pi$, the weak convergence gives
\[
\int_{\M^2} d^2\, \mathrm{d}\pi_{k_n} \longrightarrow \int_{M^2} d^2\, \mathrm{d}\pi.
\]
Hence,
\[
\lim_{n} \int_{\M^2} \widehat{C}_{k_n}\, \mathrm{d}\pi_{k_n} = \int_{\M^2} \mathrm{d}^2\, \mathrm{d}\pi.
\]
All in all, we get that
\[
F(\pi) \le \lim_{n} \Bigg (\int_{\M^2} \widehat{C}_{k_n}\, \mathrm{d}\pi_{k_n} + \tau \left(\mathrm{KL}(\pi_{k_{n}}^1 \mid P)
+ \mathrm{KL}(\pi_{k_{n}}^2 \mid Q)\right) + \frac{\varepsilon_{k_n}}{4} \,\mathrm{KL}(\pi_{k_{n}} \mid P \otimes Q) \Bigg).
\]

As $\varepsilon_{k}$ was chosen to realize the $\liminf$ of $F_{\varepsilon}(\pi_{\varepsilon})$, a property inherited by its subsequence $\varepsilon_{k_{n}}$, we get
\[
\mathrm{UOT}^\tau_{d^2}(P,Q) \le F(\pi) \le \lim_{n \to \infty} F_{k_{n}}(\pi_{k_n}) = \liminf_{\varepsilon \to 0} F_{\varepsilon}(\pi_{\varepsilon}) = \liminf_{\varepsilon\to0}
\mathrm{UOT}^\tau_{\widehat C_{\varepsilon},\varepsilon/4}(P,Q),
\]
and the lower bound follows.

We are now able to conclude the proof, by combining the lower and upper bounds
\[
\mathrm{UOT}^\tau_{d^2}(P,Q)
\le \liminf_{\varepsilon\to0}
\mathrm{UOT}^\tau_{\widehat C_{\varepsilon},\varepsilon/4}(P,Q) \le \limsup_{\varepsilon\to0}
\mathrm{UOT}^\tau_{\widehat C_{\varepsilon},\varepsilon/4}(P,Q) \le 
\mathrm{UOT}^\tau_{d^2}(P,Q).
\]
Equality must hold everywhere so $\lim_{\varepsilon\to0}
\mathrm{UOT}^{\tau}_{\widehat C_{\varepsilon},\varepsilon/4}(P,Q)$ exists and is equal to $
\mathrm{UOT}^{\tau}_{d^2}(P,Q)$.

\paragraph{Case 2: $\tau = + \infty$.}
In the balanced case where $P$ and $Q$ are probability measures and we impose $\pi^1 = P$, $\pi^2 = Q$ (formally $\tau = \infty$), the marginal KL terms force the marginals to match exactly. The main adaptation from the unbalanced case is to make sure that all the necessary operations can be done in the set of couplings $\U(P,Q)$. This is a consequence of the following remarks :
\begin{enumerate}
	\item As $\M$ is compact and $d^2$ is continuous, an optimal coupling exists for the balanced OT problem~\cite[Thm.~1.4]{santambrogio2015optimal}. Hence $\pi^{\star}$ is in fact a coupling.
	\item For the upper bound, by the block approximation Lemma~\ref{lem:block}, we get that the $\pi_{\eta}$ are couplings, which ensures that the proof goes through.
	\item For the lower bound, one observes that regularized entropic OT admits a minimizer and that it is a coupling (see, e.g., \cite[Theorem 4.2]{nutz2021introduction}) i.e. $(\pi_{\varepsilon})_{\varepsilon > 0}$ is in fact a family of couplings. As a result, so are the sequence $(\pi_{k})_{k \ge k_{0}}$ and its subsequence $(\pi_{k_{n}})_{n}$ as well as the limit $\pi$ of that subsequence. This ensures that the proof goes through.
\end{enumerate}
\end{proof}

\subsection{Proof of Proposition~\ref{prop:sinkh:cond}}\label{app:proof:sinkh-cond}

We will in fact show that the claimed result holds for any compact, connected symmetric space. This implies as a corollary that it holds for the sphere $\S^2$, since the latter is a symmetric space \cite{helgason1979differential, eschenburg1997lecture}.

\begin{proposition}[Sinkhorn divergence conditions]\label{prop:sinkh:cond:restate}
Let $\M$ be a compact, connected symmetric space. Then:
\begin{enumerate}[label=(\roman*)]
\item $\M$ is compact;
\item the time-$\varepsilon$ cost function $\widehat C_\varepsilon(x,y) := -\varepsilon \log\mathcal{H}_{\varepsilon/4}(x,y)$ is Lipschitz continuous with respect to the geodesic distance $d$;
\item the kernel $K_\varepsilon(x,y) = e^{-\widehat C_\varepsilon(x,y)/\varepsilon} = \mathcal{H}_{\varepsilon/4}(x,y)$ is positive definite and universal.
\end{enumerate}
\end{proposition}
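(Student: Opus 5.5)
Item~(i) holds by assumption, so the work is in (ii) and (iii). For both I would use the same tool: the spectral (Mercer) expansion of the heat kernel on the compact manifold $\M$. Let $0=\lambda_0<\lambda_1\le\lambda_2\le\dots$ be the eigenvalues of $-\Delta$, with $L^2(\M)$-orthonormal eigenfunctions $\varphi_k$. Then
\[
\mathcal{H}_t(x,y)=\sum_{k\ge 0} e^{-t\lambda_k}\,\varphi_k(x)\varphi_k(y),
\]
and the series converges absolutely and uniformly on $\M\times\M$, together with all derivatives, for every fixed $t>0$. This follows from Weyl's law and the standard sup-norm bounds $\|\varphi_k\|_{C^m}\lesssim (1+\lambda_k)^{N(m)}$, which are beaten by the factor $e^{-t\lambda_k}$. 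On $\S^2$ this is simply the expansion in spherical harmonics $Y_{\ell m}$ recalled in the paper.

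For (ii), fix $\varepsilon>0$ and set $t=\varepsilon/4$. The kernel $\mathcal{H}_t$ is smooth and, as recalled in Section~\ref{sec:background}, strictly positive on $\M\times\M$. Since $\M\times\M$ is compact, $\mathcal{H}_t$ is bounded below by some $c_t>0$. Hence $\log\mathcal{H}_t$ is smooth on the compact product, so its Riemannian gradient is bounded. By the mean value inequality along minimizing geodesics in each variable, $\widehat C_\varepsilon=-\varepsilon\log\mathcal{H}_t$ is Lipschitz with respect to $d$ (in each variable, hence jointly for the product distance). This step does not use the symmetric-space structure.

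For (iii), positive definiteness comes directly from the expansion. For points $x_1,\dots,x_N$ and coefficients $c\in\mathbb{R}^N$,
\[
\sum_{i,j}c_ic_j\mathcal{H}_t(x_i,x_j)=\sum_k e^{-t\lambda_k}\Big(\sum_i c_i\varphi_k(x_i)\Big)^2\ge 0.
\]
More strongly, for any finite signed Borel measure $\mu\neq 0$,
\[
\iint \mathcal{H}_t\,\mathrm{d}\mu\,\mathrm{d}\mu=\sum_k e^{-t\lambda_k}\Big(\int\varphi_k\,\mathrm{d}\mu\Big)^2>0 .
\]
The inequality is strict because the $\varphi_k$ span a subspace that is uniformly dense in $C(\M)$, so a measure annihilating every $\varphi_k$ must vanish. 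Thus the kernel is integrally strictly positive definite.

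For universality, I would identify the RKHS of $\mathcal{H}_t$ as $\{\sum_k a_k\varphi_k : \sum_k e^{t\lambda_k}a_k^2<\infty\}$. This space contains every finite linear combination of eigenfunctions, so it suffices to show that these finite combinations are dense in $(C(\M),\|\cdot\|_\infty)$. For smooth $f$, the truncated eigen-expansion converges to $f$ in every Sobolev space $H^s$ with $s>\dim\M/2$, because the coefficients of $f$ decay faster than any power of $\lambda_k$. Sobolev embedding then upgrades this to uniform convergence. Finally, $C^\infty(\M)$ is dense in $C(\M)$. On $\S^2$ there is a more elementary route: the finite combinations are exactly the restrictions of polynomials, which are dense by Stone--Weierstrass. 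The symmetric-space hypothesis is not needed for this density; it only lets us view $\mathcal{H}_t$ as an invariant (radial) kernel, consistent with Proposition~\ref{prop:kern-to-fourier}.

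I expect the main obstacle to be making the uniform convergence of the Mercer series and the density of eigenfunctions in $C(\M)$ rigorous for a general compact manifold. My first attempt would cite Mercer's theorem together with standard elliptic regularity and eigenfunction bounds, for instance from Grigor'yan's monograph. Once (i)--(iii) are established, the Sinkhorn-divergence theorems of \cite{sejourne2019sinkhorn} apply directly and give Proposition~\ref{prop:sinkh:cond}.
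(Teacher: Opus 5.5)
Your proof is correct. For (i) and (ii) it matches the paper: smoothness and strict positivity of $\mathcal{H}_{\varepsilon/4}$, a bounded gradient on the compact $\M^2$, hence Lipschitz. For (iii) you take a genuinely different route. The paper gives no direct argument and instead cites \cite{steinert2025universal}, which describes kernels on compact symmetric spaces through their spectral densities. Lemma~1 there gives positive definiteness from non-negativity of the heat kernel's spectral density, and Theorem~3 gives $\mathcal{C}_c$-universality, explicitly covering the heat kernel. You work instead from the Mercer expansion $\mathcal{H}_t=\sum_k e^{-t\lambda_k}\varphi_k(x)\varphi_k(y)$. Strict positivity of every coefficient $e^{-t\lambda_k}$, together with uniform density of the eigenfunction span in $C(\M)$, gives two things at once: $\iint \mathcal{H}_t\,\mathrm{d}\mu\,\mathrm{d}\mu>0$ for every nonzero finite signed measure, and density of the RKHS in $C(\M)$. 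The citation buys the paper brevity, and (iii) is the only place the symmetric-space hypothesis is used. Your argument is self-contained and rests on textbook facts: uniform convergence of the spectral expansion, and Sobolev embedding (or Stone--Weierstrass on $\S^2$). It also shows the symmetric-space assumption is unnecessary, so the conclusion holds on every compact connected Riemannian manifold without boundary, the same generality as Theorem~\ref{thm:conv-heat-reg}. It delivers directly the measure-level positivity used in \cite{pmlr-v89-feydy19a,sejourne2019sinkhorn}. On a compact space $\mathcal{C}_c$-universality and density of the RKHS in $C(\M)$ coincide, so nothing is lost relative to the notion the paper cites.
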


\begin{proof}
(i) Compactness is assumed in the hypothesis.

(ii) The heat kernel $\mathcal{H}_\varepsilon$ on a connected manifold is smooth and strictly positive for $\varepsilon>0$ \cite[Thm.~7.20, Cor.~8.12]{grigoryan2009heat}. Therefore $\widehat C_\varepsilon$ is smooth on $\M^2$. By compactness of $\M$, its Riemannian gradient is globally bounded, implying that $\widehat C_\varepsilon$ is Lipschitz continuous with respect to the geodesic distance (see \cite[Prop.~10.43]{boumal2023introduction}). 

(iii) Positive definiteness and universality follow from results in \cite{steinert2025universal}. More precisely, therein, kernels on symmetric spaces are defined and studied through their associated \emph{spectral densities}. There, Lemma~1 establishes that non-negativity of the spectral density implies that the associated kernel is positive definite, which holds for the heat kernel by way of an explicit formula available for the spectral density. Moreover, Theorem~3 gives conditions for which a kernel is $\mathcal{C}_{c}$-universal (i.e. local uniform approximation), as discussed in \cite{micchelli2006universal} and required in~\cite{pmlr-v89-feydy19a}; moreover, it is explicitly stated that this applies to the heat kernel. This is precisely what we need for our purposes, concluding the proof.
\end{proof}

In particular, taking $\M = \S^2$, all three conditions are satisfied, so that by \cite[Theorem 5 \& 6]{sejourne2019sinkhorn}, the associated Sinkhorn divergence $\hat{S}^\tau_\varepsilon$ defines a symmetric, positive definite, smooth function that is convex in each of its input variables and metrizes the convergence in law.

\subsection{Proof of Proposition~\ref{prop:kern-to-fourier}}\label{app:proof:kern-to-fourier}
Our integration convention follows common practice in the computational spherical-harmonics literature: integrals are taken without normalization (e.g., $SO(3)$ has volume $8\pi^{2}$). For a comprehensive treatment, see~\cite{driscoll1994computing,faraut2008analysis}.
\begin{proof}
Fix $x \in \S^2$ and recall that $\vec{n} = (0,0,1)^\top$ is the north pole. Using the fact that averaging over $\psi \in [0,2\pi)$ does not change the value, we have
\[
Kf(x) = \frac{1}{2\pi}\int_{0}^{2\pi} Kf(x) \, d\psi
= \frac{1}{2\pi} \int_{0}^{2\pi} \int_{\S^2} K(x,y) f(y) \, \sigma(dy) \, d\psi,
\]
where $\sigma$ denotes the surface measure on $\S^2$.
We now parametrize the sphere via the Euler-angle representation of rotations. Let $R(\phi,\theta,\psi) \in \mathrm{SO}(3)$ denote a rotation with Euler angles $(\phi,\theta,\psi)$; then the map
\begin{equation*}	
R(\phi,\theta,\psi) \mapsto y(\phi,\theta) = R(\phi,\theta,\psi)\vec{n} \in \S^2
\end{equation*}
covers $\S^2$ exactly once as $\phi$ and $\theta$ vary while $\psi$ is held fixed. Under this parametrization, the surface element satisfies $\sigma(dy) = \sin\theta \, d\phi \, d\theta$. Applying Fubini's theorem, one gets
\[
\int_{0}^{2\pi} \int_{\S^2} K(x,y) f(y) \, \sigma(dy) \, d\psi
= \int_{0}^{2\pi}\int_{0}^{\pi}\int_{0}^{2\pi} 
K\bigl(x,R(\phi,\theta,\psi)\vec{n}\bigr) f\bigl(R(\phi,\theta,\psi)\vec{n}\bigr) 
\sin\theta \, d\phi \, d\theta \, d\psi.
\]
The right-hand side is precisely the integral over $\mathrm{SO}(3)$ with respect to the (unnormalized) volume measure in Euler coordinates thus
\begin{equation}\label{eq:kernf1}
Kf(x) = \frac{1}{2\pi}\int_{\mathrm{SO}(3)} K(x,R\vec{n}) f(R\vec{n}) \, dR.
\end{equation}
Now, using the radial symmetry of the kernel $K(x,y) = K_0(\langle x,y\rangle)$, one has
\[
K(x,R\vec{n}) = K_0\bigl(\langle x, R\vec{n}\rangle\bigr) 
= K_0\bigl(\langle R^{-1}x, \vec{n}\rangle\bigr)
= K(R^{-1}x, \vec{n}),
\]
so that from equation~\eqref{eq:kernf1}, one gets
\[
Kf(x) = \frac{1}{2\pi}\int_{\mathrm{SO}(3)} K(x,R\vec{n}) f(R\vec{n}) \, dR
= \frac{1}{2\pi}\int_{\mathrm{SO}(3)} f(R\vec{n}) \, K(R^{-1}x, \vec{n}) \, dR
= \frac{1}{2\pi}(f \star \tilde K)(x),
\]
with $\tilde K(z) = K(z, \vec{n})$, concluding the proof.
\end{proof}

\section{Spherical Harmonics}\label{app:spherical-harmonics}
\paragraph{Coordinates on $\S^2$.}

Points on the two-sphere $\S^2 = \{x \in \mathbb{R}^3 : \|x\| = 1\}$ are parametrized by colatitude $\theta \in [0,\pi]$ and longitude $\varphi \in [0, 2\pi)$, with the identification
\[
x = \bigl(\cos\varphi\sin\theta,\; \sin\varphi\sin\theta,\; \cos\theta\bigr)^\top \in \mathbb{R}^3.
\]

\paragraph{The rotation group $\mathrm{SO}(3)$.}
The symmetry group of $\S^2$ is the \emph{special orthogonal group}
\[
\mathrm{SO}(3) = \bigl\{ R \in \mathbb{R}^{3\times 3} : R^\top R = I,\; \det R = 1 \bigr\},
\]
consisting of all proper (orientation-preserving) rotations of $\mathbb{R}^3$.
Any $R \in \mathrm{SO}(3)$ can be factored using the \emph{Euler angles} $\varphi \in [0,2\pi)$, $\theta \in [0,\pi]$, $\psi \in [0,2\pi)$ as
\[
R(\phi, \theta, \psi) = R_z(\phi)\, R_y(\theta)\, R_z(\psi),
\]
where $R_z$ and $R_y$ denote elementary rotations around the $z$- and $y$-axes:
\[
R_z(\alpha) =
\begin{pmatrix}
\cos\alpha & -\sin\alpha & 0 \\
\sin\alpha &  \cos\alpha & 0 \\
0 & 0 & 1
\end{pmatrix},
\qquad
R_y(\alpha) =
\begin{pmatrix}
 \cos\alpha & 0 & \sin\alpha \\
 0 & 1 & 0 \\
-\sin\alpha & 0 & \cos\alpha
\end{pmatrix}.
\]
Unlike translations in $\mathbb{R}^n$, rotations do not commute in general, making $\mathrm{SO}(3)$ non-abelian.

\paragraph{$L^2$ inner product.}
The natural inner product on $\S^2$ is
\[
\langle u, v \rangle_{L^2(\S^2)}
= \int_{\S^2} u\, \bar{v}\; \mathrm{d}\Omega
= \int_0^{2\pi}\!\int_0^{\pi}
  u(\theta,\varphi)\,\overline{v(\theta,\varphi)}\;\sin\theta\,\mathrm{d}\theta\,\mathrm{d}\varphi,
\]
where $\mathrm{d}\Omega = \sin\theta\,\mathrm{d}\theta\,\mathrm{d}\varphi$ is the rotation-invariant surface measure on $\S^2$.
This defines the Hilbert space $L^2(\S^2)$ of square-integrable functions on the sphere.

\paragraph{Spherical harmonics.}
The \emph{spherical harmonics} are defined for degree $\ell \ge 0$ and order $|m| \le \ell$ by
\[
Y_{\ell m}(\theta,\varphi)
= N_{\ell m}\, P_\ell^m(\cos\theta)\, e^{im\varphi},
\qquad
N_{\ell m} = \sqrt{\frac{2\ell+1}{4\pi}\frac{(\ell-m)!}{(\ell+m)!}},
\]
where $P_\ell^m$ are the associated Legendre polynomials.
The normalization $N_{\ell m}$ is chosen so that the family $\{Y_{\ell m}\}_{\ell \ge 0,\,|m|\le\ell}$ is orthonormal:
\[
\langle Y_{\ell m},\, Y_{\ell' m'} \rangle_{L^2(\S^2)} = \delta_{\ell\ell'}\,\delta_{mm'}.
\]
Among all possible orthonormal bases of $L^2(\S^2)$, the spherical harmonics are distinguished by their equivariance under $\mathrm{SO}(3)$: applying any rotation $R \in \mathrm{SO}(3)$ to $Y_{\ell m}$ produces a linear combination of harmonics of the \emph{same} degree $\ell$,
\[
Y_{\ell m}(R^{-1}\,\cdot) = \sum_{|m'|\le\ell} D^\ell_{m'm}(R)\, Y_{\ell m'},
\]
where $D^\ell(R)$ is the Wigner $D$-matrix of degree $\ell$.
The family $\{Y_{\ell m}\}$ is moreover complete, so any $f \in L^2(\S^2)$ admits the \emph{spherical harmonic expansion}
\[
f(\theta,\varphi)
= \sum_{\ell=0}^{\infty}\sum_{m=-\ell}^{\ell}
  \widehat{f}_{\ell m}\, Y_{\ell m}(\theta,\varphi),
\qquad
\widehat{f}_{\ell m} = \mathcal{F}[f](\ell,m)
\coloneqq \langle f,\, Y_{\ell m}\rangle_{L^2(\S^2)}.
\]

\paragraph{Spherical convolution.}
On $\S^2$, convolution is defined by averaging over rotations rather than translations. Following \cite{driscoll1994computing}, this leads to the spherical convolution operator
\[
(f \star \kappa)(x) = \int_{\mathrm{SO}(3)} f(R\vec{n}) \, \kappa(R^{-1}x) \, \mathrm{d}R,
\]
which admits a convolution theorem in the spherical harmonic domain.

Proposition~\ref{prop:kern-to-fourier} shows that our radial kernel operator coincides exactly with spherical convolution (up to normalization), enabling fast Sinkhorn iterations via fast SHTs. Other uses of spherical convolution for neural networks include \cite{kondor2018generalization} which studies group convolutions on homogeneous spaces including $\S^2$, while \cite{sosnovik2021disco} develop scale-equivariant layers via hierarchical sampling.

\section{Implementation Details}\label{appendix:implementation_details}
Our implementation uses the differentiable spherical harmonic transform libraries \texttt{torch-harmonics}~\cite{bonev2023spherical} and \texttt{s2fft}~\cite{price2024differentiable}, built on top of \texttt{PyTorch}~\cite{paszke2019pytorch} and \texttt{JAX}~\cite{jax2018github}.

\begin{algorithm}[H]
\caption{Spherical Harmonic Sinkhorn}
\label{alg:sinkhorn}
\begin{algorithmic}[1]
\REQUIRE $\vector{p}, \vector{q} \in \mathbb{R}^n_+$ (source and target marginals), $\varepsilon > 0$ (regularization), $\tau \in (0, +\infty]$ (unbalanced parameter), $T \in \mathbb{N}$ (max iterations), $L \in \mathbb{N}$ (band-limit), $\lbrace a_i\rbrace$ (area weights of the discretization), $\delta > 0$ (convergence threshold), $C \in \mathbb{N}$ (check frequency)
\STATE $t \leftarrow \varepsilon/4$
\STATE $h_\ell \leftarrow e^{-t\ell(\ell+1)},\quad \ell = 0, \ldots, L-1$
\STATE $\vector{u} \leftarrow \mathbf{1}$, $\vector{v} \leftarrow \mathbf{1}$
\FOR{$k = 1, \ldots, T$}
    \IF{$\tau = +\infty$}
        \STATE $\vector{u} \leftarrow \vector{p} \oslash \mathrm{HeatConv}(\vector{v};\,\{h_\ell\})$            \hfill\COMMENT{with $\oslash$ being the element-wise division}
        \STATE $\vector{v} \leftarrow \vector{q} \oslash \mathrm{HeatConv}(\vector{u};\,\{h_\ell\})$
        \STATE $s \leftarrow \textstyle\sum_i u_i$ 
        \STATE $\vector{u} \leftarrow \vector{u}/s$,\quad $\vector{v} \leftarrow s\,\vector{v}$
    \ELSE
        \STATE $\vector{u} \leftarrow \bigl(\vector{p} \oslash \mathrm{HeatConv}(\vector{v};\,\{h_\ell\})\bigr)^{\tau/(\tau+\varepsilon)}$
        \STATE $\vector{v} \leftarrow \bigl(\vector{q} \oslash \mathrm{HeatConv}(\vector{u};\,\{h_\ell\})\bigr)^{\tau/(\tau+\varepsilon)}$
        \STATE $c \leftarrow \dfrac{\tau}{2}\left(\log\left(\sum_i a_i\,p_i\,u_i^{-\varepsilon/\tau}\right) - \log \left(\sum_i a_i\,q_i\,v_i^{-\varepsilon/\tau}\right)\right)$
        \STATE $\vector{u} \leftarrow e^{c/\varepsilon}\,\vector{u}$,\quad $\vector{v} \leftarrow e^{-c/\varepsilon}\,\vector{v}$
    \ENDIF
    \IF{$k \bmod C = 0$}
        \STATE $e_{\mathrm{marg}} \leftarrow \bigl\|\vector{u} \odot \mathrm{HeatConv}(\vector{v};\,\{h_\ell\}) - \vector{p}\bigr\|_2 \,/\, \|\vector{p}\|_2$
        \IF{$e_{\mathrm{marg}} < \delta$}
            \STATE \textbf{break}
        \ENDIF
    \ENDIF
\ENDFOR
\STATE $\vector{\phi} \leftarrow \varepsilon \log \vector{u}$,\quad $\vector{\psi} \leftarrow \varepsilon \log \vector{v}$
\IF{$\tau = +\infty$}
    \STATE \textbf{return} $\sum_i a_i \bigl[\phi_i\,p_i + \psi_i\,q_i\bigr]$
\ELSE
    \STATE \textbf{return} $\sum_i a_i \bigl[\tau(1 - e^{-\phi_i/\tau})\,p_i + \tau(1 - e^{-\psi_i/\tau})\,q_i\bigr]$
\ENDIF
\end{algorithmic}
\end{algorithm}

\subsection{Discretizations of $\S^2$}\label{appendix:discretization_sphere}
In practice, functions on $\mathbb{S}^2$
are represented on a finite grid of sampling points $(\theta_i, \phi_j)$. We briefly review several such discretization schemes, illustrated in Figures \ref{fig:discretization_th} and \ref{fig:discretization_hp}. Certain schemes admit an exact sampling theorem: for bandlimited signals, i.e., functions whose expansion in the spherical harmonics is supported on degrees $\ell \leq L$, a sufficiently dense sampling guarantees that both the forward and inverse Spherical Harmonic Transforms are exact.

\begin{itemize}
	\item \textbf{Equiangular (Driscoll-Healy \cite{driscoll1994computing} / McEwen-Wiaux \cite{mcewen2011novel} / Clenshaw-Curtis \cite{clenshaw1960}).} Samples are placed on a regular grid equally spaced in longitude and at equiangular latitudes. The resolution is parametrized by a band-limit $L$. These grids admit exact sampling theorems, which require $\mathcal{O}(L^2)$ samples.
    
	\item \textbf{Gauss-Legendre.} Latitudinal samples are placed at the $L$ roots of the Legendre polynomial $P_L$, yielding an exact sampling theorem. As with equiangular grids, the resolution is parametrized by $L$ and the total number of samples is $\mathcal{O}(L^2)$.
    
	\item \textbf{Lobatto (Clenshaw-Curtis with poles).} A variant of equiangular sampling that includes the poles as quadrature nodes, using the zeros of $(1-x^2)P'_{L-1}(x)$ in the latitudinal direction. It shares the same band-limit parametrization and sampling requirements as the schemes above.
    
	\item \textbf{HEALPix \cite{gorski2005healpix}.} The sphere is partitioned into $N = 12 N_\mathrm{side}^2$ equal-area pixels arranged in iso-latitude rings. The resolution is controlled by $N_\mathrm{side}$. No exact sampling theorem is available, and $N$ is an independent resolution parameter.
\end{itemize}

For exact iso-latitudinal schemes (Clenshaw-Curtis, Gauss-Legendre, Lobatto), spherical harmonic transforms exploit a separation-of-variables structure. A direct implementation scales as $\mathcal{O}(L^4)$, while the standard semi-naïve algorithm reduces this to $\mathcal{O}(L^3)$ by combining FFTs in longitude with projections onto associated Legendre polynomials. Faster algorithms with complexity $\mathcal{O}(L^2 \log^2 L)$, based on fast polynomial transforms, have been proposed~\cite{healy2003ffts} and in some cases extend beyond specific sampling schemes~\cite{kunis2003fast}. However, these approaches typically require additional stabilization procedures whose numerical robustness and complexity are not fully understood in general. As a result, despite their favorable asymptotic scaling, they are not yet as reliable in practice. Consequently, most implementations (e.g., \cite{price2024differentiable}) rely on the semi-naïve $\mathcal{O}(L^3)$ algorithms for both forward and inverse transforms, which offer a more robust trade-off between efficiency and numerical stability. Similar considerations apply to HEALPix, where fast methods exist~\cite{kunis2003fast, drake2020fast}  but practical computations often favor stable, moderately super-quadratic algorithms.

In our experiments, we use the Clenshaw-Curtis grid for an equiangular sampling and the HEALPix grid for an equal-area sampling scheme as it is widely used in the climate science community. However our implementation is flexible and can be easily adapted to other discretization schemes.

\begin{figure}[h]
  \centering
  \includegraphics[width=\linewidth]{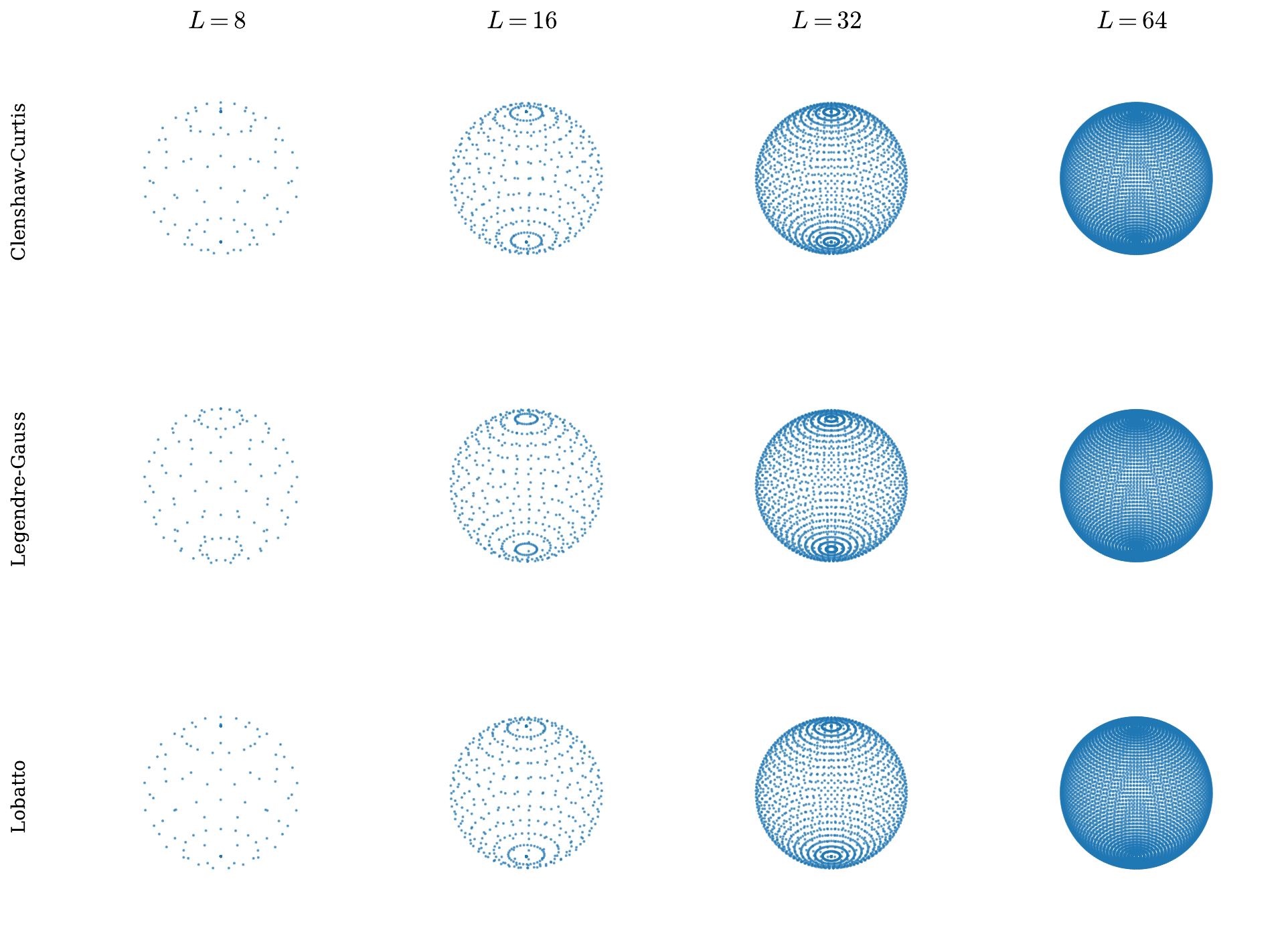}
  \caption{Clenshaw-Curtis, Gauss-Legendre and Lobbato discretization schemes for varying $L$.}
  \label{fig:discretization_th}
\end{figure}

\begin{figure}[h]
  \centering
  \includegraphics[width=\linewidth]{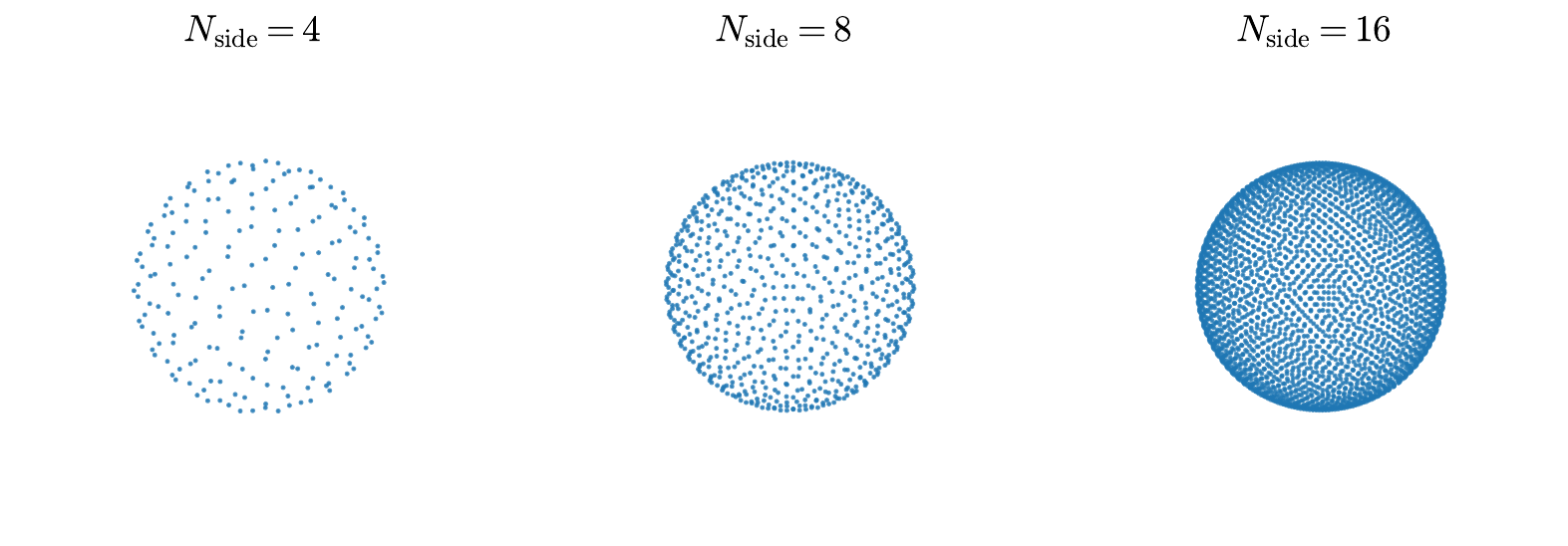}
  \caption{HEALPix discretization scheme for varying $N_\mathrm{side}$.}
  \label{fig:discretization_hp}
\end{figure}

\subsection{Discrete OT on the Sphere}
\label{appendix:discretization_ot}

Let $\{x_i\}_{i=1}^n$ be a discretization of $\S^2$ with area weights
$\vector{a} \in \mathbb{R}^n_+$, $\vector{a}^\top \mathbf{1} = 4\pi$, so that
$\int_{\S^2} p\,\mathrm{d}\sigma \approx \vector{a}^\top \vector{p}$ where $\vector{p}=(p(x_1), \ldots, p(x_n))$.
A transport plan between two positive measures $\vector{p}, \vector{q} \in \mathbb{R}^n_+$
is represented as a matrix $\bm{\pi} \in \mathbb{R}^{n \times n}_+$, where $\pi_{ij}$
denotes the density of mass transported from $x_i$ to $x_j$.
This gives the area-weighted discrete entropic OT problem $\mathrm{OT}_{C,\varepsilon}^\tau(\vector{p}, \vector{q})$ as
\begin{equation}
\label{eq:discrete-reg-ot}
\min_{\bm{\pi} \geq 0}
\sum_{i,j} C(x_i, x_j)\,\pi_{ij}\,a_i a_j
+ \varepsilon\, \textrm{KL}_{\vector{a}}(\bm{\pi} \mid \vector{p} \otimes \vector{q}) + \tau \left( \textrm{KL}_{\vector{a}}(\bm{\pi} \vector{a} \mid \vector{p})
  + \textrm{KL}_{\vector{a}}(\bm{\pi}^\top \vector{a}\mid \vector{q})\right),
\end{equation}
where
\[
 \textrm{KL}_{\vector{a}}(\bm{\pi} \mid \vector{p} \otimes \vector{q}) = -\sum_{i,j} \pi_{ij}\,\left(\log \frac{\pi_{ij}}{p_iq_j} - 1\right)\,a_i a_j
\]
and
\[
 \textrm{KL}_{\vector{a}}(\bm{\pi} \vector{a} \mid \vector{p}) = -\sum_i (\bm{\pi}\vector{a})_i\,\left(\log \frac{(\bm{\pi}\vector{a})_i}{p_i} - 1\right)\,a_i.
\]

We now show that~\eqref{eq:discrete-reg-ot} is equivalent to the standard discrete
entropic OT problem on the probability mass vectors
$\bar{\vector{p}} = \mathrm{diag}(\vector{a})\vector{p}$, and
$\bar{\vector{q}} = \mathrm{diag}(\vector{a})\vector{q}$. Let $\bm{\gamma} = \mathrm{diag}(\vector{a})\,\bm{\pi}\,\mathrm{diag}(\vector{a})$, then~\eqref{eq:discrete-reg-ot} is equivalent to
\begin{equation}
\label{eq:discrete-reg-ot-unbalanced-standard}
\min_{\bm{\gamma} \geq 0}
\sum_{ij} C(x_i, x_j)\,\gamma_{ij}
+ \varepsilon\,\mathrm{KL}_{\mathbf{1}}(\bm{\gamma} \mid \bar{\vector{p}} \otimes \bar{\vector{q}})
+ \tau\bigl(\mathrm{KL}_{\mathbf{1}}(\bm{\gamma}\mathbf{1} \mid \bar{\vector{p}})
           + \mathrm{KL}_{\mathbf{1}}(\bm{\gamma}^\top\mathbf{1} \mid \bar{\vector{q}})\bigr).
\end{equation}

Under the change of variables $\gamma_{ij} = a_i a_j \pi_{ij}$
(i.e.\ $\pi_{ij} = \gamma_{ij}/(a_i a_j)$), the three terms transform as follows.
The cost satisfies $\sum_{ij} C(x_i,x_j)\,\pi_{ij}\,a_i a_j = \sum_{ij} C(x_i,x_j)\,\gamma_{ij}$ directly.
The regularization becomes
\[
\textrm{KL}_{\vector{a}}(\bm{\pi} \mid \vector{p}\otimes\vector{q})
= \sum_{ij}a_i a_j\,\pi_{ij}\left(\log\frac{\pi_{ij}}{p_i q_j} - 1\right)
= \sum_{ij}\gamma_{ij}\left(\log\frac{\gamma_{ij}}{\bar{p}_i\bar{q}_j} - 1\right)
= \textrm{KL}_{\mathbf{1}}(\bm{\gamma} \mid \bar{\vector{p}}\otimes\bar{\vector{q}}).
\]
Finally, since $(\bm{\pi}\vector{a})_i = (\bm{\gamma}\mathbf{1})_i/a_i$,
\[
\textrm{KL}_{\vector{a}}(\bm{\pi}\vector{a} \mid \vector{p})
= \sum_i a_i\,(\bm{\pi}\vector{a})_i\left(\log\frac{(\bm{\pi}\vector{a})_i}{p_i} - 1\right)
= \sum_i (\bm{\gamma}\mathbf{1})_i\left(\log\frac{(\bm{\gamma}\mathbf{1})_i}{\bar{p}_i} - 1\right)
= \textrm{KL}_{\mathbf{1}}(\bm{\gamma}\mathbf{1} \mid \bar{\vector{p}}),
\]
and symmetrically $\textrm{KL}_{\vector{a}}(\bm{\pi}^\top\vector{a} \mid \vector{q})
= \textrm{KL}_{\mathbf{1}}(\bm{\gamma}^\top\mathbf{1} \mid \bar{\vector{q}})$.
Hence~\eqref{eq:discrete-reg-ot} and~\eqref{eq:discrete-reg-ot-unbalanced-standard} coincide.

We use Sinkhorn algorithm to solve the area-weighted discrete entropic OT problem~\eqref{eq:discrete-reg-ot}. Once it has converged, we recover the optimal Kantorovich potentials $\vector{\phi} = \varepsilon \log \vector{u}$ and $\vector{\psi} = \varepsilon \log \vector{v}$ and compute the transport cost as
\begin{equation}
  \mathrm{OT}_{C,\varepsilon}^{\tau}(\vector{p}, \vector{q})
  = \sum_i a_i\, \tau\!\left(1 - e^{-\phi_i/\tau}\right) p_i
  + \sum_i a_i\, \tau\!\left(1 - e^{-\psi_i/\tau}\right) q_i.
\end{equation}
This follows from strong duality for unbalanced entropic OT~\cite{chizat2018scaling}. In the balanced case $\tau=\infty$, the formula reduces to the standard dual $\sum_i a_i\,\phi_i\,p_i + \sum_i a_i\,\psi_i\,q_i$.

\subsection{Convergence Criterion}\label{app:convergence_criterion}

We monitor convergence via the relative left marginal violation. After each pair of Sinkhorn updates, we evaluate
\begin{equation}
  e_{\mathrm{marg}} = \frac{\bigl\|\vector{u} \odot \mathrm{HeatConv}(\vector{v};\,\{h_\ell\}) - \vector{p}\bigr\|_2}{\|\vector{p}\|_2},
\end{equation}
and declare convergence when $e_{\mathrm{marg}} < \delta$. To reduce overhead, this check is performed every $C$ iterations. We use $C = 10$ throughout our experiments.

\subsection{Numerical instabilities}\label{appendix:numerical_instabilites}
\label{appendix:instabilities}

In \texttt{PyTorch} with \texttt{float64} arithmetic, the smallest representable positive number is approximately $4.94\times10^{-324}$.
On $\mathbb{S}^2$, the squared-distance matrix $C_{ij}=d(x_i,x_j)^2$ has entries as large as $\pi^2$. 
The Sinkhorn algorithm requires evaluating the Gibbs kernel $\mathcal{K}_{ij}=e^{-C_{ij}/\varepsilon}$, whose entries underflow to zero as soon as $d(x_i,x_j)^2/\varepsilon \geq \ln\!\left(1/4.94\times10^{-324}\right) \approx 744.4$. 
Antipodal pairs are the first to be affected, at $\varepsilon \leq \pi^2/744.4 \approx 0.013$. As $\varepsilon$ decreases further, the effective support of the kernel around each point shrinks: progressively closer pairs underflow in turn until $\mathcal{K}$ is effectively diagonal. 
At that point Sinkhorn iterations can no longer redistribute mass and the algorithm degenerates.

A distinct instability arises from the truncated spherical harmonic representation. 
The Sinkhorn in Heat convolution step evaluates $\mathcal{F}^{-1}(\mathcal{F}[u](l,m) \cdot e^{-\frac{\varepsilon}{4}\ell(\ell+1)})$, which approximates the convolution by the heat kernel by a finite-degree expansion up to $L$. 
Although the exact heat kernel is a strictly positive operator, its degree-$L$ truncation can take small negative values when $\varepsilon^{-1/2} \gtrsim L$, i.e.\ when the kernel is too narrow to be resolved on the SH grid. 
Even a single negative entry in the reconstructed convolution suffices to make the Sinkhorn update undefined.

\section{Additional Numerical Experiments}
\subsection{Runtime comparison}\label{appendix:runtime}

We use the same setup as in the runtime comparison of the main paper (random probability measures on $\S^2$, single NVIDIA A40 GPU, \texttt{float64} precision, 5 runs per configuration), but vary the regularization parameter $\varepsilon$ across multiple values instead of fixing it to $0.1$.
Figure~\ref{fig:runtime_appendix} shows that the speed and scalability and advantage of \textsc{SHOT} observed in Figure~\ref{fig:runtime} is consistent across values of $\varepsilon$ and grid types.

\begin{figure}[t]
    \centering
    \includegraphics[width=\textwidth]{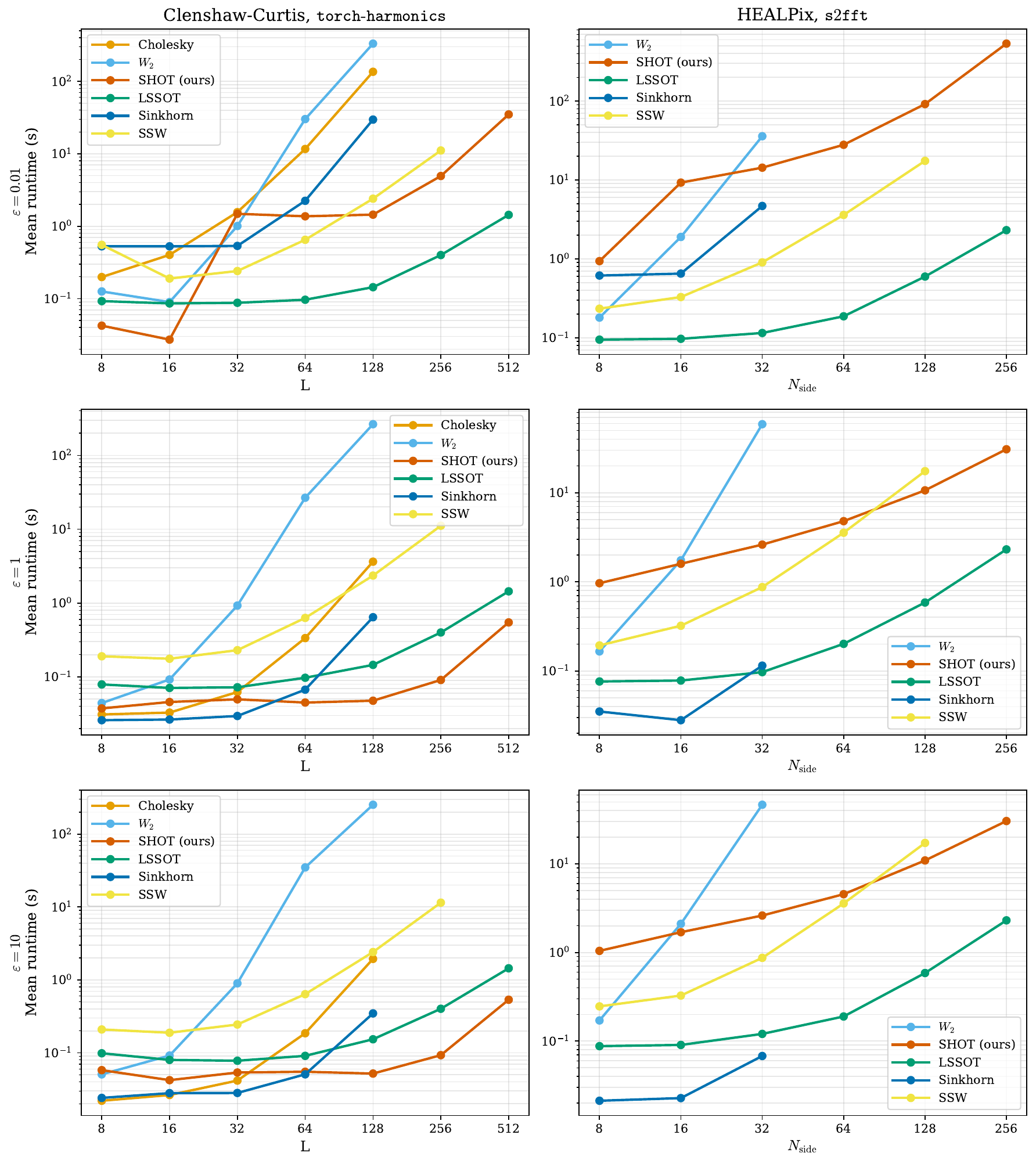}
    \caption{Runtime comparison across methods, grid types, and regularization parameters $\varepsilon$.}
    \label{fig:runtime_appendix}
\end{figure}

\subsection{Convolution on $\mathbb{S}^2$}
\label{appendix:convolution_sphere}

We validate that the heat-kernel convolution and the explicit matrix convolution converge to the same result as the grid resolution increases.
The input distribution is a Von Mises-Fisher (vMF) measure with concentration $\kappa = 10$ and a fixed random mean direction.
All experiments are run on CPU with \texttt{float64} precision.

\paragraph{Kernels and normalization.}
The two convolutions operate with kernels at different scales and must be brought to a common scale before comparison.
The \emph{matrix} convolution applies the unnormalized geodesic kernel $K_\varepsilon(x,y) = e^{-d(x,y)^2/\varepsilon}$, which integrates to
\[
  C(\varepsilon) = \int_{\S^2} e^{-d(x,y)^2/\varepsilon}\,\mathrm{d}\sigma(y),
\]
a quantity that depends on $\varepsilon$ and goes to $0$ as $\varepsilon \to 0$.
The \emph{heat} convolution applies the heat kernel $H_t$ at time $t = \varepsilon/4$, which is a \emph{stochastic} (probability-preserving) kernel satisfying $\int_{\S^2} H_t(x,y)\,\mathrm{d}\sigma(y) = 1$.
By Varadhan's formula, $H_{\varepsilon/4}(x,y) \approx e^{-d(x,y)^2/\varepsilon} / C(\varepsilon)$ for small $\varepsilon$, so the two outputs are related by
\[
  (H_{\varepsilon/4} * f)(x) \approx \frac{(K_\varepsilon * f)(x)}{C(\varepsilon)}.
\]
Comparing them directly would therefore show a systematic amplitude difference of $C(\varepsilon)$, which grows large as $\varepsilon \to 0$, rather than a discretization error.
To isolate the approximation error, we rescale the heat output by $C(\varepsilon)$ before computing the mean absolute error.
Note that this rescaling is irrelevant in the Sinkhorn iterations, where scaling $K$ by any constant is absorbed by the dual variables and leaves the optimal transport plan unchanged.

\paragraph{Setup and results.}
We consider two discretizations of $\S^2$: the Clenshaw-Curtis equiangular grid (via \texttt{torch-harmonics}) with band limits $L \in \{8, 16, 32, 64\}$, and the HEALPix grid (via \texttt{s2fft}) with $N_\mathrm{side} \in \{4, 8, 16, 32\}$.
For each $\varepsilon \in [10^{-3}, 10]$ (8 values log-spaced), we apply both convolutions and measure their mean absolute error after rescaling.
As expected, the error decreases as $L$ (resp.\ $N_\mathrm{side}$) grows, and is larger at large $\varepsilon$ where the heat-kernel approximation is less accurate.
Figure~\ref{fig:vmf_conv_quant} shows the quantitative error curves for both grids, and Figures~\ref{fig:vmf_conv_qual_th} and~\ref{fig:vmf_conv_qual_hp} show the qualitative Mollweide maps for the Clenshaw-Curtis and HEALPix grids respectively.

\begin{figure}[h]
  \centering
  \begin{subfigure}[t]{0.48\linewidth}
    \centering
    \includegraphics[width=\linewidth]{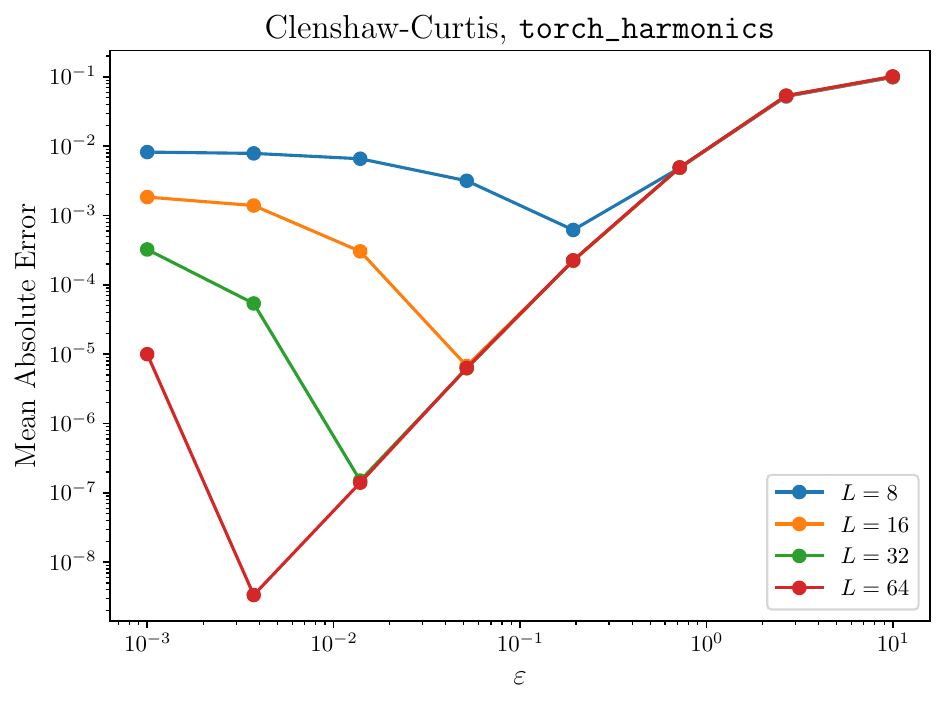}
    \caption{Clenshaw-Curtis ({\tt torch-harmonics}), $L \in \{8,16,32,64\}$.}
  \end{subfigure}
  \hfill
  \begin{subfigure}[t]{0.48\linewidth}
    \centering
    \includegraphics[width=\linewidth]{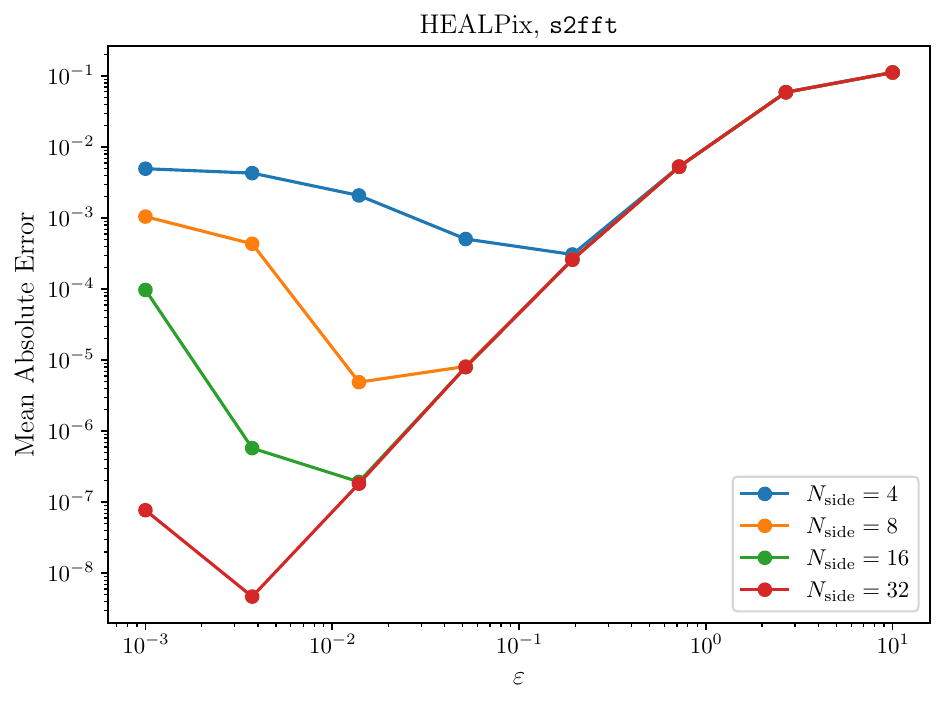}
    \caption{HEALPix ({\tt s2fft}), $N_\mathrm{side} \in \{4,8,16,32\}$.}
  \end{subfigure}
  \caption{Mean absolute error between heat and matrix convolutions as a function of $\varepsilon$, for both grid types.}
  \label{fig:vmf_conv_quant}
\end{figure}

\begin{figure}[p]
  \centering
  \includegraphics[width=\linewidth, height=0.85\textheight, keepaspectratio]{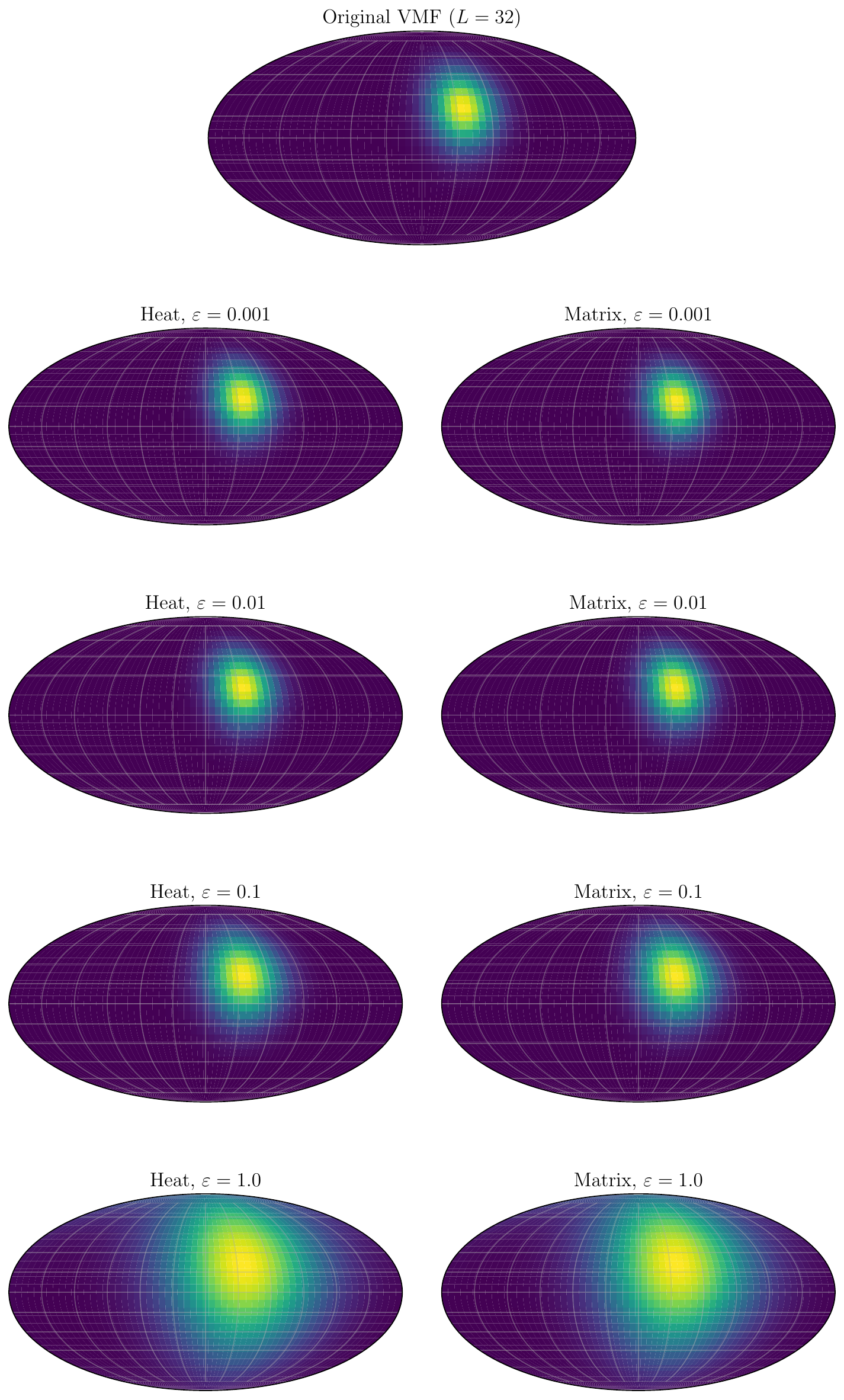}
  \caption{Mollweide projections of the heat (left column) and matrix (right column) convolutions of a vMF measure on the Clenshaw-Curtis grid ($L=32$), for $\varepsilon \in \{10^{-3}, 10^{-2}, 10^{-1}, 10^0, 10^1\}$ (top to bottom).}
  \label{fig:vmf_conv_qual_th}
\end{figure}

\begin{figure}[p]
  \centering
  \includegraphics[width=\linewidth, height=0.85\textheight, keepaspectratio]{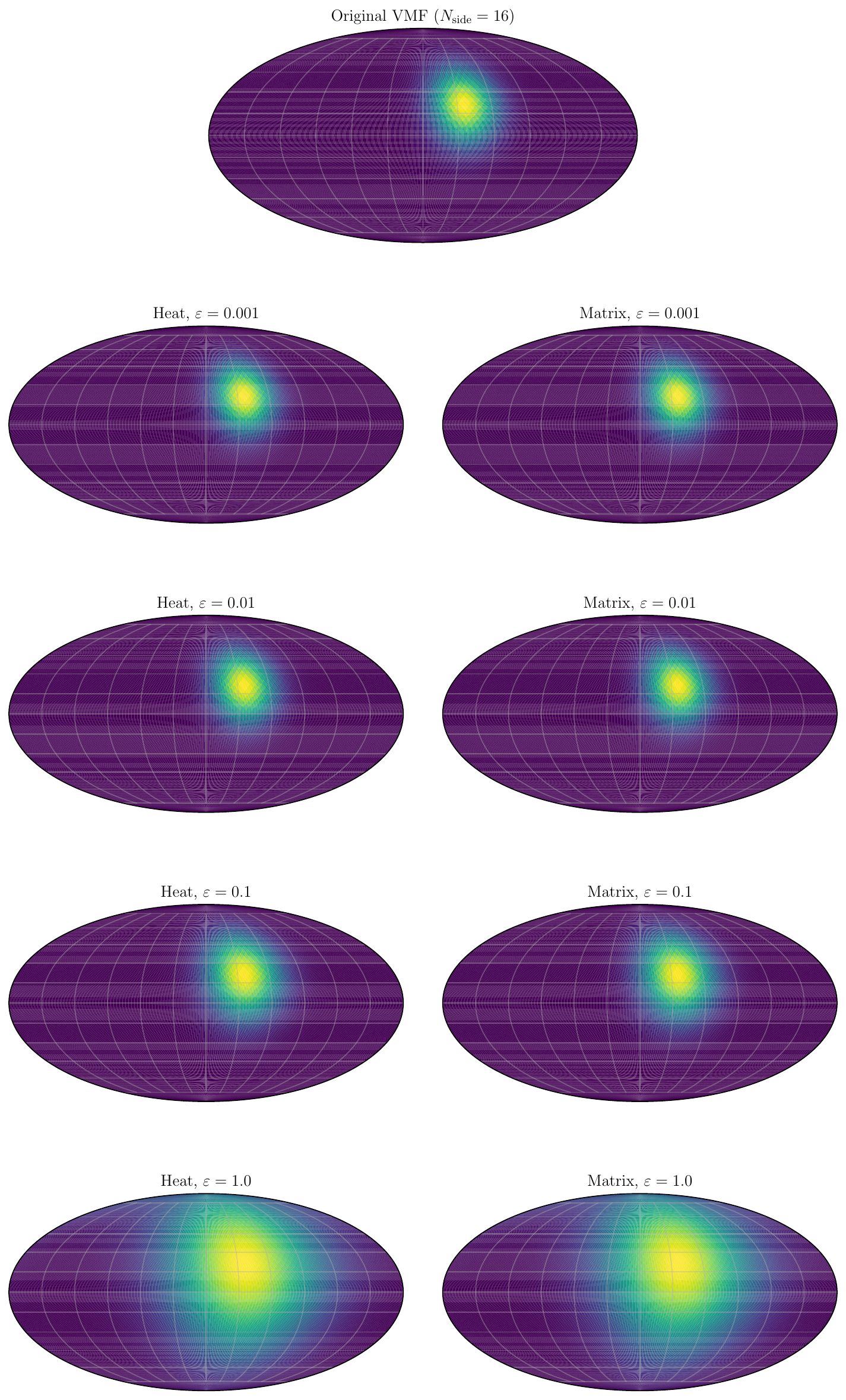}
  \caption{Mollweide projections of the heat (left column) and matrix (right column) convolutions of a vMF measure on the HEALPix grid ($N_\mathrm{side}=16$), for $\varepsilon \in \{10^{-3}, 10^{-2}, 10^{-1}, 10^0, 10^1\}$ (top to bottom).}
  \label{fig:vmf_conv_qual_hp}
\end{figure}

\section{Details on Climate Models Comparisons}\label{appendix:iclimate}

Precipitation fields are represented as
discrete measures on the \textsc{HEALPix} grid~\cite{gorski2005healpix}
at resolution $N_\mathrm{side} = 64$ ($\sim$49\,152 equal-area pixels,
$\approx 1^\circ$ effective resolution), which provides an
equal-area pixelation of $\mathbb{S}^2$ directly compatible with our
algorithm. ERA5 data is accessed via the HERA5
dataset\footnote{\url{https://orcestra-campaign.org/hera5.html}}, already provided in \textsc{HEALPix} format,
while CMIP6 model outputs are retrieved from the Pangeo Google Cloud
archive\footnote{\url{https://pangeo-data.github.io/pangeo-cmip6-cloud/}} and regridded to the same
\textsc{HEALPix} grid via bilinear interpolation. All fields are
converted to mm/day. We focus on total precipitation and analyze two
climatological seasons: December-January-February~(DJF) and
June-July-August~(JJA), each computed as a temporal average over
the 2000-2014 historical period. Visual illustrations of the resulting
precipitation distributions for ERA5 and one model (ACCESS-CM2) are shown in
Figure~\ref{fig:illustration}.
\begin{figure}[t]
   \centering\includegraphics[width=0.99\linewidth]{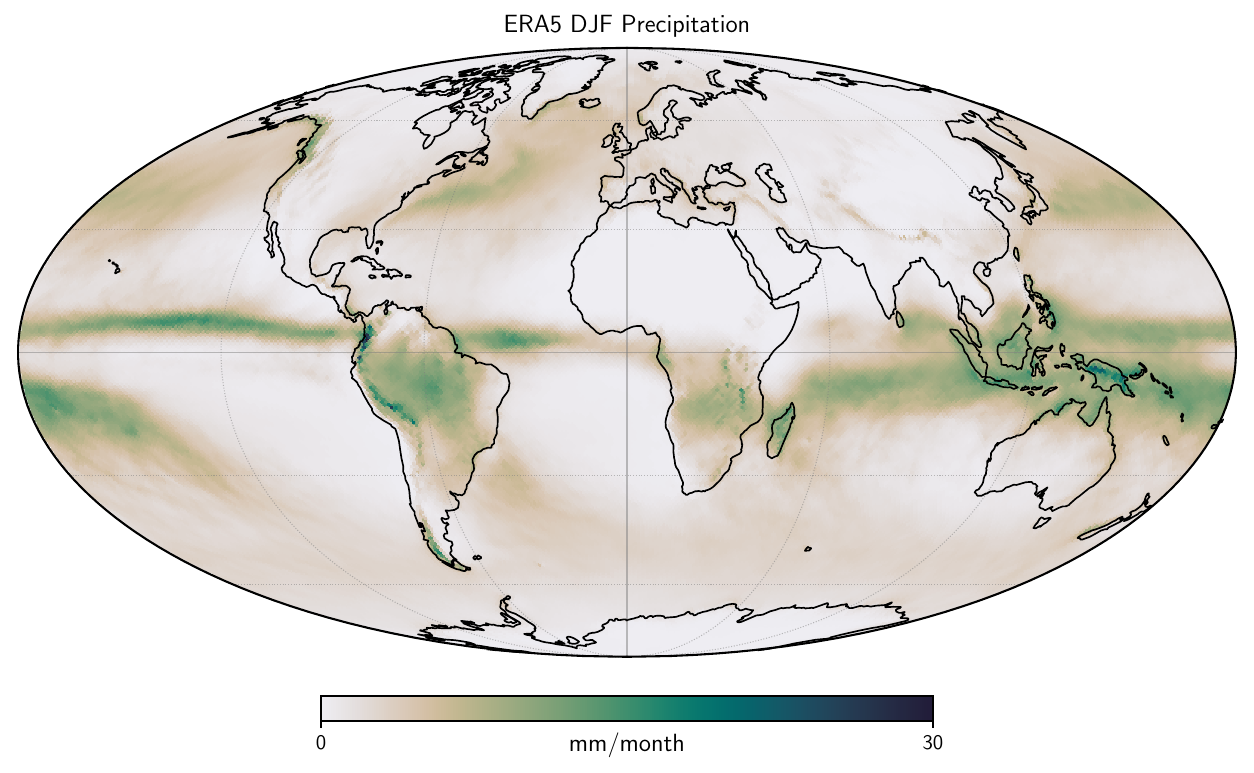}\\
   \centering\includegraphics[width=0.99\linewidth]{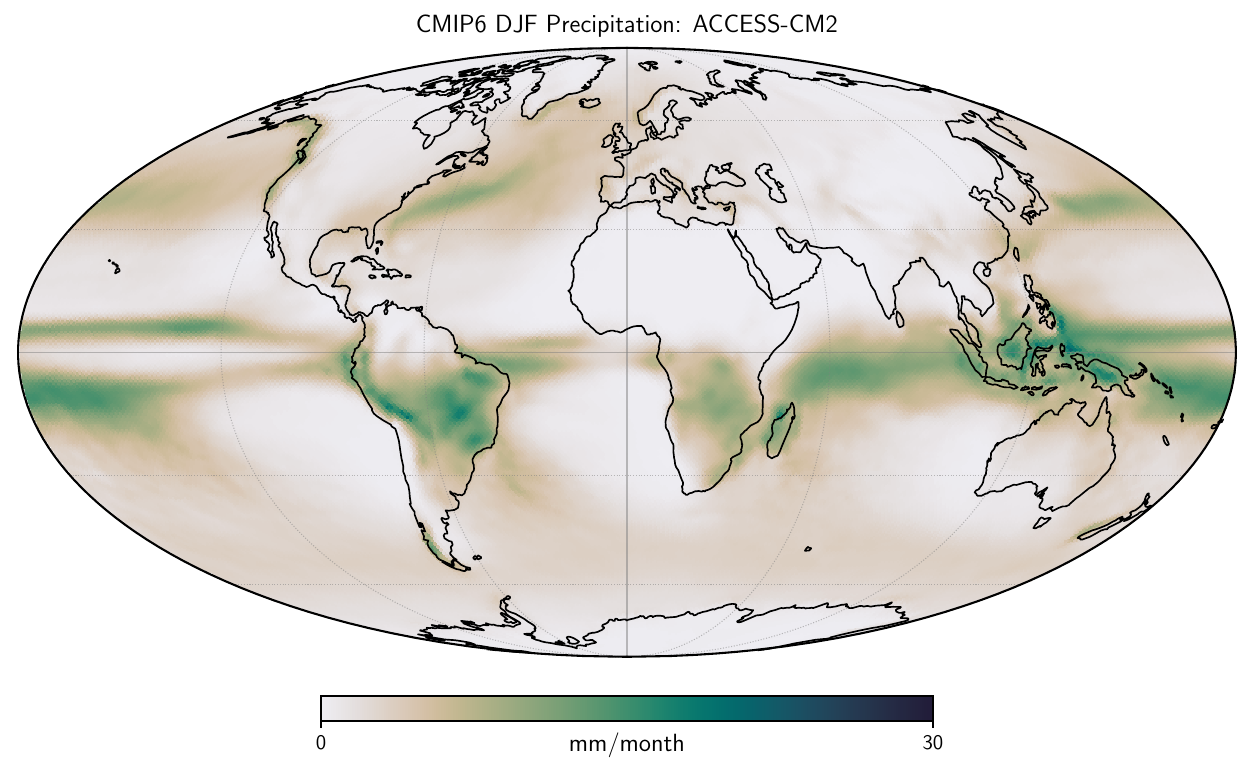}
  \caption{Illustrations of the precipitation distributions for the ERA5 reanalysis (top) and the ACCESS-CM2 model (bottom) during the DJF season.
  }
  \label{fig:illustration}
\end{figure}
The choice of precipitation as the variable of interest is motivated by its critical role in the climate system and its known challenges for climate models, which often struggle to accurately represent its spatial distribution and intensity.

\paragraph{Gradient-based regional attribution.}
We partition $\mathbb{S}^2$ into five geographic regions:
Tropics ($|\text{lat}| < 23.5^\circ$), ITCZ ($|\text{lat}| < 10^\circ$),
NH Midlatitudes ($30^\circ$--$60^\circ$N), SH Midlatitudes ($30^\circ$--$60^\circ$S),
and Arctic ($> 66.5^\circ$N), and compute the area-weighted RMS of
$g_m$ over each region as a regional bias index. These indices are
reported for all eight models in both seasons in
Figure~\ref{fig:regional_gradients}, normalized by the global maximum across
all models, regions, and seasons to preserve inter-model
differences.

\paragraph{Results.}
Four physically interpretable patterns emerge from
Figure~\ref{fig:regional_gradients}.

\textit{Tropical and ITCZ biases are structurally season-independent.}
These two regions show the largest absolute gradient norms in both
seasons and the smallest DJF$-$JJA differences ($|\Delta| \leq
0.02$ for all models), confirming that tropical convection
deficiencies in CMIP6 are a structural property of model
parameterizations rather than a seasonally forced error. This
has a direct practical implication: improving tropical
precipitation requires architectural changes to convection schemes,
not seasonal retuning.

\textit{SH midlatitudes are uniformly well captured.}
The SH midlatitude column shows the lowest gradient norms across
both seasons ($|\Delta| \leq 0.01$) and negligible inter-model
spread, indicating that Southern Ocean precipitation is similarly
represented by all models and that the unbalanced Sinkhorn
divergence adds no discriminating power in this region. This is
itself a useful negative result.

\textit{NH midlatitudes show a mild but consistent DJF elevation.}
Most models exhibit slightly larger NH midlatitude bias in DJF than
JJA ($\Delta = +0.02$ for ACCESS-CM2, GFDL-ESM4, IPSL-CM6A-LR,
MRI-ESM2-0; $\Delta = 0.00$ for MPI-ESM1-2-HR and CanESM5),
consistent with the activation of boreal winter storm tracks and
polar vortex dynamics. The signal is secondary and not universal,
suggesting model-dependent sensitivity to wintertime circulation.

\begin{figure}[!t]
  \centering
  \includegraphics[width=0.99\linewidth]{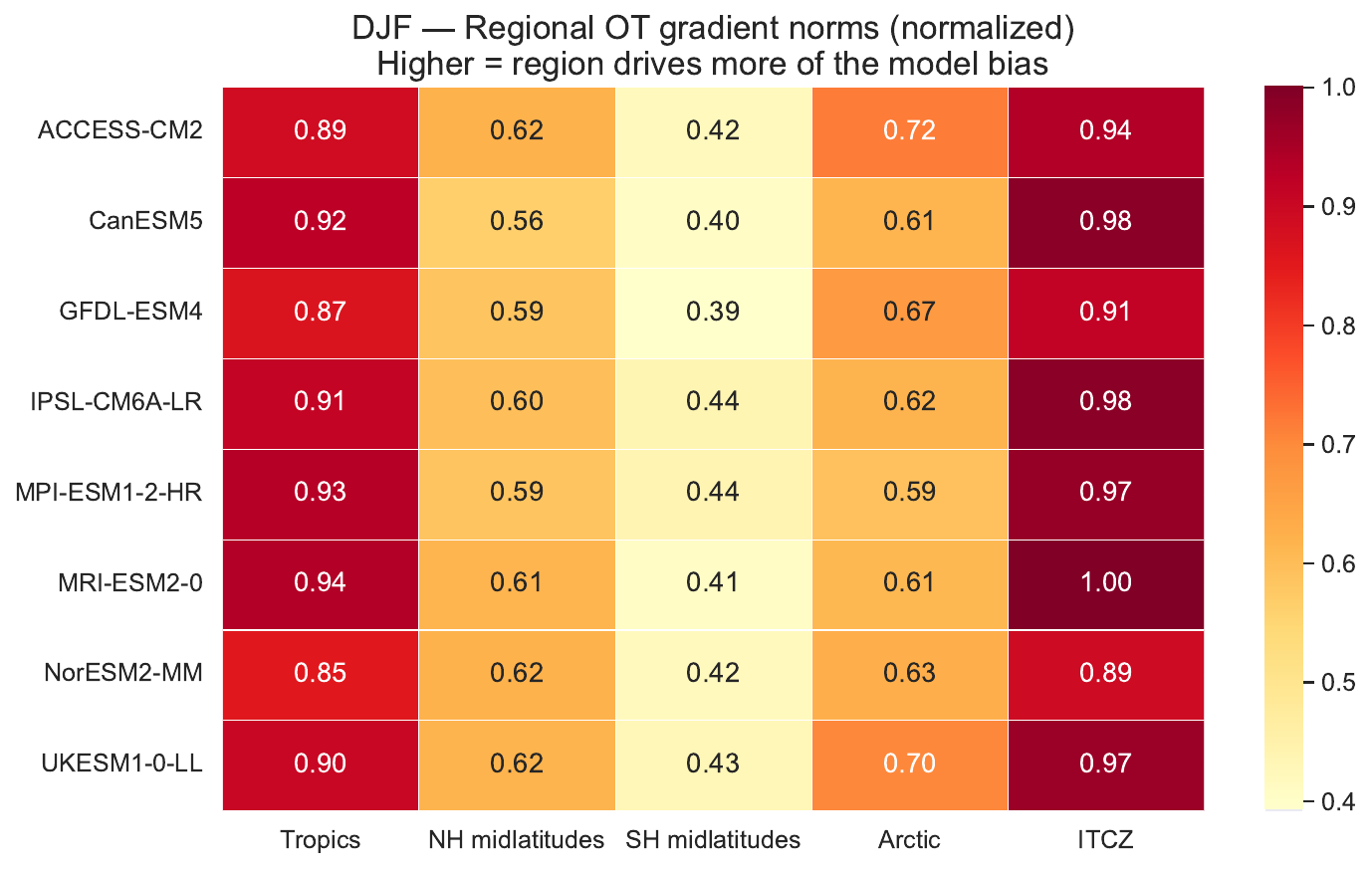}\\
   \centering
  \includegraphics[width=0.99\linewidth]{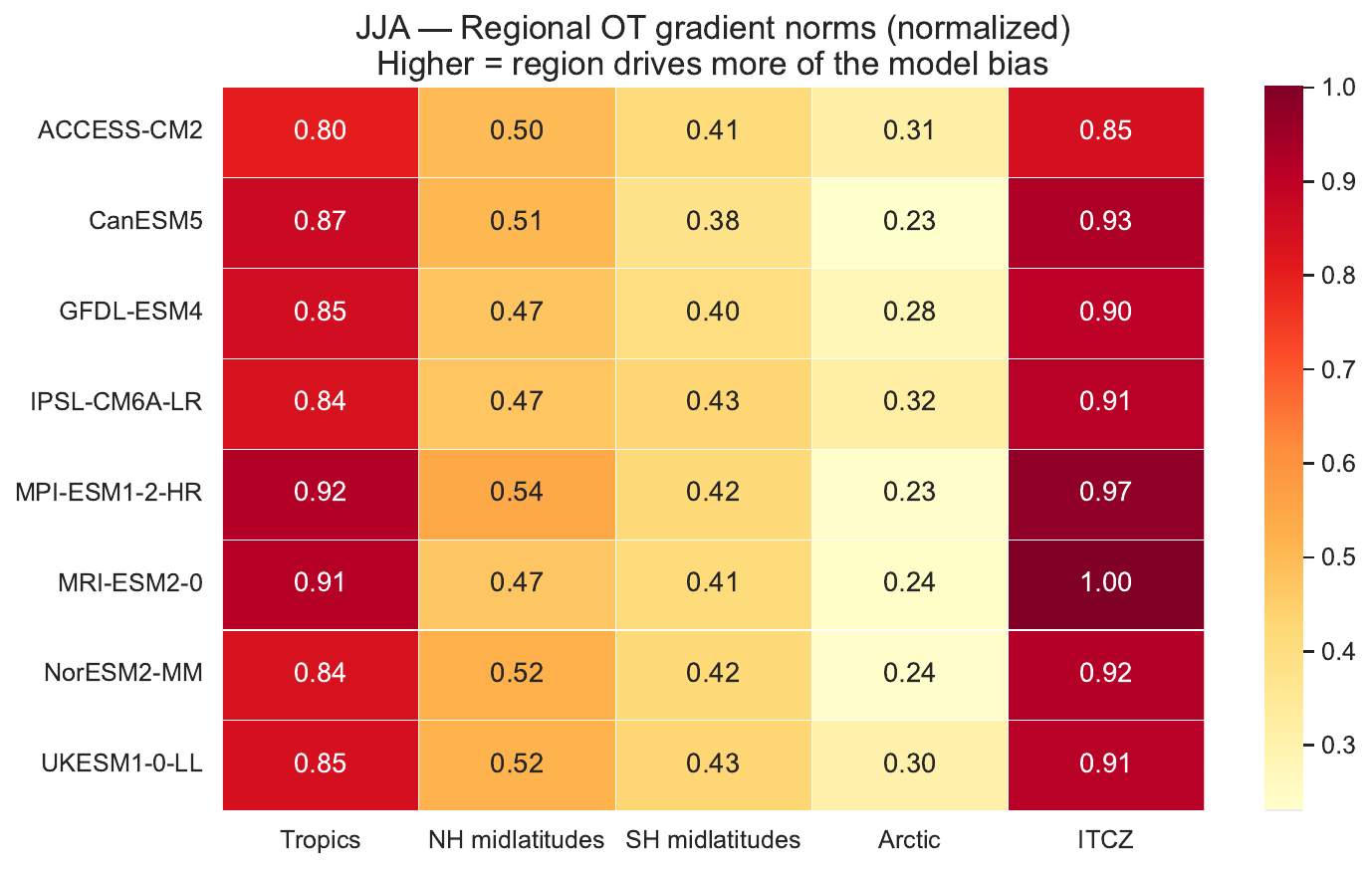}
  \caption{
    Regional OT gradient norms for eight CMIP6 models in DJF
    (up) and JJA (down), normalized by the global maximum across
    all models, regions, and seasons. Higher values indicate that
    a region contributes more to the total Sinkhorn divergence
    from ERA5. Tropical and ITCZ biases dominate in both seasons
    with consistent inter-model spread. The Arctic column is
    strongly elevated in DJF relative to JJA, while SH midlatitudes
    remain uniformly low across all models and seasons.
  }
  \label{fig:regional_gradients}
\end{figure}


\end{document}